\documentclass[11pt,a4paper]{article}

\usepackage[utf8]{inputenc}
\usepackage[T1]{fontenc}
\usepackage{mathptmx} 
\usepackage{amsmath, amssymb, amsthm}
\usepackage{graphicx}
\usepackage{hyperref}
\usepackage{geometry}
\usepackage{algorithm}
\usepackage{algpseudocode}
\usepackage{listings}
\usepackage{color}
\usepackage{booktabs}
\usepackage{enumitem}

\newtheorem{definition}{Definition}
\newtheorem{theorem}{Theorem}

\newtheorem{remark}{Remark}

\newcommand{\SROIQ}{\mathcal{SROIQ}}
\newcommand{\TBox}{\mathcal{T}}
\newcommand{\ABox}{\mathcal{A}}
\newcommand{\KB}{\mathcal{K}}
\newcommand{\Pre}{\mathit{Pre}}
\newcommand{\Eff}{\mathit{Eff}}
\newcommand{\code}[1]{\texttt{\small #1}}

\definecolor{backcolour}{rgb}{0.97,0.97,0.95}
\definecolor{framegray}{rgb}{0.6,0.6,0.6}
\definecolor{jsonkey}{rgb}{0.0, 0.3, 0.6}
\definecolor{jsonval}{rgb}{0.6, 0.1, 0.1}
\lstdefinelanguage{json}{
	basicstyle=\normalfont\ttfamily\scriptsize,
	backgroundcolor=\color{backcolour},
	showstringspaces=false,
	breaklines=true,
	frame=single,
	rulecolor=\color{framegray},
	stringstyle=\color{jsonval},
	keywordstyle=\color{jsonkey},
	numbers=left,
	numbersep=6pt,
	tabsize=2
}

\title{\LARGE \bf PIE-APT: Abductive Planning over Temporal Dynamic Knowledge Graphs via Incremental Reasoning}

\author{
	\textbf{Amir Hossein Sharafi}$^{1,2}$, \textbf{Alireza Shahbazi}$^{1}$ \\
	\small $^{1}$Najm \\
	\small $^{2}$Department of Mathematics, Tafresh University \\
	\small \texttt{am\_sharafi@tafreshu.ac.ir}, \texttt{shahbazi@najm.ir}
}

\date{}

\begin{document}
	
	\maketitle
	
	\begin{abstract}
		Reasoning and planning over Temporal Dynamic Knowledge Graphs (TDKGs) present significant theoretical challenges, particularly in open-world environments characterized by incomplete information. While expressive action formalisms exist, they frequently encounter decidability issues and struggle with the well-known Ramification Problem. Furthermore, managing incomplete knowledge through structural abductive reasoning typically requires expansive, combinatorial search spaces. This paper introduces a unified framework comprising two integrated modules---\textbf{PIE-Abducer} (incremental direct-derivation abduction) and \textbf{PIE-APT} (Abductive Planning for TDKGs)---both operating natively on the highly expressive $\SROIQ$ Description Logic.
		
		Our approach models state transitions along a linear timeline as non-monotonic updates to deductively closed, consistent DL theories. By treating the underlying incremental reasoner as a black-box engine and representing actions natively in OWL without external modal operators, we preserve logical decidability. To address incomplete knowledge, \textbf{PIE-Abducer} circumvents traditional Minimal Hitting Set (MHS) enumeration: rather than combinatorially searching over structural syntax, our algorithm incrementally reasons over states maintained as DL theories. It injects the logical negation of a target goal into a consistent branch and extracts missing premises via direct refutation consequences. \textbf{PIE-APT} then employs a recursive \textit{Generate-and-Test} architecture that actively synthesizes both the action sequence and the requisite abductive assumptions. This is achieved by interleaving backward-chaining A* search with \textbf{PIE-Abducer} up to a predefined causal depth, followed by strict validation via forward-chaining Temporal Projection to evaluate the logical trajectory. We empirically evaluate four OWL benchmarks representing distinct semantic abilities missing from classical planning: parameterized goals with witness search, mid-search DL entailment, open-world assumption injection, and adversarial plan synthesis. Our evaluation demonstrates qualitative superiority over classical planners, and proves that our direct-derivation approach significantly outperforms an MHS-faithful baseline during abductive enrichment.
	\end{abstract}
	
	\textbf{Keywords:} Semantic Web, Temporal Dynamic Knowledge Graphs, Description Logics,
	Open World Assumption, Ramification Problem, Abductive Planning, Incremental Reasoning, Adversarial Plan Synthesis.
	
	\section{Introduction}
	
	The Semantic Web heavily relies on Description Logics (DLs), such as the highly expressive $\SROIQ$, to provide formal, machine-readable structures for automated reasoning. While traditional DLs are inherently static, the necessity to model evolving domains has motivated a paradigm shift toward Temporal Dynamic Knowledge Graphs (TDKGs) \cite{shahbazi2024strategy}. Operating effectively on TDKGs involves computing complex changes over chronological timelines, managing the pervasive issue of incomplete information under the Open World Assumption (OWA), and handling the reality that TDKGs may structurally become logically inconsistent. Achieving this requires a sophisticated synergy of deductive reasoning (to trace logical trajectories) and abductive reasoning (to synthesize missing foundational facts).
	
	\textbf{The Ramification Problem and Ontology-Mediated Planning:} 
	The integration of action formalisms into DLs has been the subject of extensive exploration \cite{baader2005integrating, shi2005logical}. Methods that extend standard DL syntax with modal or temporal operators (e.g., $\mathcal{D}_{\mathcal{ALCO}@}$ \cite{chang2007dynamic}) frequently encounter issues of undecidability. A more foundational and persistent challenge is the \textit{Ramification Problem} \cite{thielscher1997ramification}. In traditional model-theoretic semantics, an action is viewed as a transition between semantic interpretations. Enforcing background static axioms (the TBox) upon a newly generated interpretation requires the definition of complex causal rules or manual occlusion sets \cite{baader2005integrating} to determine which indirect effects must logically follow. This requirement renders the computation highly intractable.
	
	To mitigate this bottleneck, a prominent modern trajectory relies on explicit-input Knowledge and Action Bases (eKABs) \cite{calvanese2016ekab}. Recent state-of-the-art approaches \cite{john2023towards, john2024planning, borgwardt2022expressivity} decouple the planning domain from the ontology by compiling DL axioms into PDDL or Datalog derivation rules. For instance, Borgwardt et al. \cite{borgwardt2025automated} recently advanced this domain by applying \textit{coherence update semantics} to automate consistency during state updates. While highly effective, this broader thesis of ontology-mediated planning typically restricts the ontology's expressivity to lightweight, tractable fragments like Horn-DLs or DL-Lite to remain decidable \cite{borgwardt2022expressivity, john2024planning}. Furthermore, these translation-based methods heavily rely on finite, closed domains and lack native semantic expansion (e.g., dynamically synthesizing new individuals) during plan generation. Crucially, as highlighted by Nhu \cite{nhu2025practical}, the theoretical richness of ontology planning often suffers from a severe lack of \textit{practical algorithms}; many formalisms remain as mathematical prototypes without explicit, deployable software implementations.
	
	\textbf{The Diverse Landscape of Abductive Reasoning:} 
	To navigate incomplete knowledge, intelligent systems rely on Abductive Reasoning. The application of abduction to Knowledge Graphs has recently gained massive traction across diverse AI paradigms. For instance, modern neuro-symbolic and generative approaches leverage deep learning and Large Language Models (LLMs) to hypothesize missing graph structures, evaluate dynamic logical inferences, and answer complex queries \cite{bai2024advancing, gao2025controllable, gao2026unifying, zheng2025logidynamics}. While these methods showcase the growing demand for exploratory reasoning in TDKGs, our focus aligns with the specific symbolic track of \textit{Structural ABox/TBox Abduction} \cite{koopmann2021signature}. Here, the objective is to synthesize a specific, formally sound set of assertions (triples) that logically entail a given observation with strict decidability guarantees. Within this symbolic realm, contemporary solvers (e.g., the AAA solver \cite{pukancova2020} and MXP \cite{homola2023}) predominantly rely on Reiter's Minimal Hitting Set (MHS) algorithm \cite{reiter1987theory}. However, because MHS-based solvers conduct a Breadth-First Search (BFS) across a combinatorial space of hypotheses, they are inherently susceptible to state-space explosion. To maintain tractability, they are often forced to restrict the permissible vocabulary to predefined ``abducibles'' and strictly bound the search depth.
	
	\textbf{Synergy of Deduction and Abduction:}
	Recently, researchers have begun unifying deduction and abduction into cohesive frameworks. For example, Gao et al. \cite{gao2026unifying} employ Masked Diffusion Models to capture the bidirectional relationship between queries (deduction) and explanatory hypotheses (abduction). Inspired by this holistic vision, our work proposes a purely symbolic, decidable counterpart that natively unifies abductive reasoning and dynamic planning.
	
	\textbf{Proposed Approach and Contributions:}
	In this paper, we present a unified framework that seamlessly addresses both dynamic planning and abductive reasoning natively within Description Logics. Our methodology treats the \textbf{PIE-Reasoner} (\textit{Platform Independent, Incremental, Expandable}) as a black-box engine over knowledge graphs. Its expandable architecture enables non-classical components---\textbf{PIE-Abducer} and \textbf{PIE-APT}---to be grafted directly onto a standard deductive $\SROIQ$ base without compilation to external planning languages \cite{john2023towards, john2024planning}. Addressing the community's call for operational deployability \cite{nhu2025practical}, we provide deployable pseudocodes and report measurements from an \textit{asynchronously parallelized} Python implementation (the pseudocodes in Appendix A describe the logical control flow; concurrency is an implementation refinement discussed in Section~\ref{sec:evaluation}). 
	
	The core contributions of our framework are:
	\begin{enumerate}
		\item \textbf{States as Deductively Closed Theories:} We model state transitions along a direct linear timeline. Instead of updating interpretations, we treat each state within a trajectory as a formal DL theory. State updates are executed purely incrementally. By leaning on the incremental reasoner to automatically enforce TBox constraints, the Ramification Problem is resolved natively without manual causal rules.
		\item \textbf{Native OWL Actions (Preserving Decidability):} Actions are formalized natively within the TDKG as instances of an \texttt{Action} class without introducing external modal operators. This structural choice ensures the underlying reasoning process remains fully decidable in $\SROIQ$.
		\item \textbf{Direct-Derivation Abduction (\textbf{PIE-Abducer}):} Rather than searching predefined abducible combinations via MHS, \textbf{PIE-Abducer} injects the logical negation of a goal into a consistent branch and synthesizes hypotheses natively by extracting direct refutation consequences. 
		\item \textbf{Recursive Generate-and-Test Architecture (PIE-APT):} We formulate the abductive planning problem structurally without presupposing assumptions. PIE-APT interleaves A* search with recursive abduction to dynamically \textit{synthesize} both the action sequence and the requisite causal assumptions.
		\item \textbf{Adversarial Plan Synthesis:} We introduce an automated stress-testing mechanism capable of synthesizing valid action trajectories that inadvertently violate ontological constraints, serving as a diagnostic tool for domain modeling.
		\item \textbf{Empirical Evaluation and Baselines:} We benchmark PIE-APT across four distinct OWL domains. We provide a qualitative comparison against classical planning (via a fair PIE-to-PDDL export solved by Fast Downward \cite{helmert2006fast}) to highlight the necessity of native TBox entailment during search. Furthermore, we benchmark \textbf{PIE-Abducer} against an MHS-faithful backend \cite{pukancova2020}, proving computational efficiency during abductive synthesis while yielding identical minimal hypotheses.
	\end{enumerate}
	
	\section{Preliminaries: Description Logics and Theories}
	
	Our framework is formally grounded in the $\SROIQ$ Description Logic, which provides the robust theoretical foundation for the Web Ontology Language (OWL 2 DL). 
	
	The vocabulary of a Description Logic is defined by a \textbf{signature} $\Sigma = (N_C, N_R, N_I)$, which consists of three mutually disjoint sets: \textbf{concept names} $N_C$ (denoting sets or classes of objects), \textbf{role names} $N_R$ (denoting binary relationships between objects), and \textbf{individual names} $N_I$ (denoting specific, discrete objects). The model-theoretic semantics is given by an interpretation $\mathcal{I} = (\Delta^{\mathcal{I}}, \cdot^{\mathcal{I}})$, which consists of a non-empty domain $\Delta^{\mathcal{I}}$ and an interpretation function $\cdot^{\mathcal{I}}$ that maps each individual $a \in N_I$ to an element $a^{\mathcal{I}} \in \Delta^{\mathcal{I}}$, each concept $C \in N_C$ to a subset $C^{\mathcal{I}} \subseteq \Delta^{\mathcal{I}}$, and each role $R \in N_R$ to a binary relation $R^{\mathcal{I}} \subseteq \Delta^{\mathcal{I}} \times \Delta^{\mathcal{I}}$.
	
	Based on this signature, an \textbf{axiom} is a well-formed logical formula constructed over $\Sigma$ that expresses a formal constraint or factual assertion about the domain. A \textbf{Theory} in Description Logics is a set of such axioms closed under logical entailment. A theory $\mathcal{T}$ is \textbf{consistent} if it does not entail a contradiction ($\mathcal{T} \not\models \bot$), and \textbf{maximal} if for every valid proposition $\phi$ constructed over $\Sigma$, either $\mathcal{T} \models \phi$ or $\mathcal{T} \models \neg\phi$. 
	
	In our setting, a \textbf{Knowledge Graph (KG)} is formally treated as a DL theory over $\Sigma$, partitioned into a tuple $\KB = \langle \TBox, \ABox \rangle$, where:
	\begin{itemize}
		\item \textbf{TBox ($\TBox$):} The Terminological Box encapsulates the schema and static background knowledge. It consists of terminological axioms constructed from $N_C$ and $N_R$, such as concept inclusions ($C \sqsubseteq D$), equivalences ($C \equiv D$), and complex role axioms permitted in $\SROIQ$ (e.g., irreflexivity, asymmetry, property disjointness).
		\item \textbf{ABox ($\ABox$):} The Assertional Box contains factual knowledge about specific individuals. It comprises assertional axioms mapped over $N_I$, such as concept assertions ($C(a)$ for $a \in N_I$), role assertions ($R(a, b)$ for $a,b \in N_I$), and identity assertions ($a = b$, $a \neq b$) native to OWL (\texttt{owl:sameAs}, \texttt{owl:differentFrom}). In practical Semantic Web implementations, these assertional DL axioms directly correspond to RDF triples.
	\end{itemize}
	
	While we acknowledge that in dynamic environments TDKGs may inherently be or become inconsistent (a property we actively exploit during Adversarial Plan Synthesis), standard reasoning tasks require a consistent starting theory. 
	
	\textbf{The Open World Assumption (OWA):} Crucially, TDKGs on the Semantic Web operate under the strict Open World Assumption. Under the OWA, a consistent TDKG is structurally \textit{not} a maximal theory. If a proposition $\phi$ cannot be logically deduced ($\KB \not\models \phi$) and its exact negation also cannot be deduced ($\KB \not\models \neg\phi$), its truth value is considered \textit{unknown}. This systematic incompleteness prevents traditional closed-world planning engines from succeeding out-of-the-box, providing the essential logical space for our abductive module to safely synthesize valid missing knowledge without automatically triggering inconsistencies.
	
	\section{PIE-Abducer: Incremental Direct-Derivation Abduction}
	\label{sec:pie-abducer}
	
	Formally, an \textbf{abduction problem} over a knowledge graph $\mathcal{KG}$
	(a consistent DL-theory) and a finite set of observation axioms $O$ seeks to
	discover a finite set of axioms $H$, termed an \textbf{abductive assumption}
	(or \textbf{explanation}), satisfying the following five criteria
	\cite{pukancova2020, glimm2022_concept}:
	
	\begin{itemize}[noitemsep]
		\item \textbf{(E) Explanatory Entailment:} $\mathcal{KG} \cup H \models O$.
		\item \textbf{(C1) Subset-Minimality:} There is no assumption $H' \subset H$ such that $\mathcal{KG} \cup H' \models O$.
		\item \textbf{(C2) Consistency:} $\mathcal{KG} \cup H \not\models \bot$.
		\item \textbf{(C3) Relevance:} $H \not\models O$ (the hypothesis must not trivially restate the observation).
		\item \textbf{(C4) Problem Admissibility:} $\mathcal{KG} \not\models O$ and $\mathcal{KG} \not\models \bot$.
	\end{itemize}
	
	Constraint \textbf{(C4)} serves as a strict precondition; the engine aborts
	immediately if the base knowledge is inconsistent or if \emph{all} goals are
	already entailed ($\mathcal{KG} \models o$ for every $o \in O$).  If only a
	subset of observations is already entailed, those are excluded and the engine
	proceeds with the remaining non-entailed goals.  Constraint \textbf{(E)} is
	the core explanatory requirement.  For single-atom direct-derivation steps
	(Phase~1), the engine constructively guarantees (E) via contraposition on the
	incremental consequence set.  For compound hypotheses assembled via Cartesian
	product (Phases~3 and~6), Algorithm~\ref{alg:validate} performs a mandatory
	post-processing pass that branches the reasoner once with the combined
	hypothesis $H$ and, in a single invocation, verifies logical consistency
	(C2), confirms joint entailment $\mathcal{KG} \cup H \models O$ (E), and
	applies \textbf{semantic minimality} (C1) by querying the explanation model to
	identify and independently remove atoms that are deductive consequences of
	other atoms within the hypothesis.  This single step simultaneously enforces
	(C2), (E), and a critical form of (C1) for every candidate.
	
	\textbf{ABox vs.\ TBox Abduction.}  While many conventional solvers rigidly
	restrict hypothesis generation to predefined assertional
	facts~\cite{pukancova2020}, \textbf{PIE-Abducer} operates natively over the
	full deductive closure of the $\SROIQ$ theory.  Both ABox assertions and TBox
	terminological axioms may freely manifest in $H$, and the engine purposefully
	avoids maintaining a restricted ``abducible'' signature.
	
	\paragraph{From single to multiple observations.}
	
	The engine accepts a set of observation axioms $O = \{o_1, \ldots, o_n\}$ as
	input.  Multi-observation processing is an \emph{architectural overlay} that
	interleaves with the single-observation pipeline at each depth level.  Prior
	to reasoning, all individual IRIs referenced in $O$ (both subjects of
	\texttt{rdf:type} axioms and subjects/objects of role-assertions) are
	automatically declared as \texttt{NamedIndividual}s within the knowledge
	graph.  This strict Open World Assumption (OWA) enforcement ensures that
	targeted individuals absent from the initial ABox---such as
	\texttt{fr:jane} in the FamilyRelations benchmark---are rendered discoverable
	during existential grounding.
	
	At each depth level $\ell$, every observation $o_i \in O$ is processed
	\emph{independently} through the single-observation pipeline (Phases~1--5),
	producing a set of per-observation candidate hypotheses
	$\mathcal{H}_i^{(\ell)}$.  These per-observation results are then combined via
	a \textbf{cross-goal dual Cartesian product} (Phase~6) operating at two
	granularities: (a) the \textit{hypothesis-level} Cartesian merges complete
	per-observation hypotheses, preserving the logical atom groupings built during
	intra-goal combination (Phase~3); (b) the \textit{atom-level} Cartesian
	extracts only singleton hypotheses from each observation and pairs them
	independently across observations, capturing mixed-origin explanations where,
	for example, a property self-witness from one observation pairs with a newly
	derived class assertion from another.
	
	To ensure that observations whose per-goal pipeline has bottomed out (i.e., no
	novel atoms are produced at the current level) still participate in cross-goal
	combination, a \textbf{self-witness mechanism} operates at two levels: (i) in
	the per-hypothesis expansion (Phase~1), the sub-goal $s$ itself is retained as
	a singleton candidate atom, ensuring it can re-enter the Cartesian at the
	current depth; (ii) at the level loop, any hypothesis from the previous
	frontier that re-emerges during expansion is preserved in the result set even
	if its expansion yields no novel atoms.  Empty per-observation result sets are
	filled with the original observation axiom $\{o_i\}$ as a singleton
	self-witness.  Self-witnesses are explicitly excluded from further expansion
	to guarantee convergence (Phase~8).
	
	The entire post-processing pipeline (hypothesis-level + atom-level Cartesian,
	consistency/entailment validation, subset-minimality, and cross-level
	deduplication) is \textbf{unified} for both single-observation and
	multi-observation cases.  For single observations, the Cartesian trivially
	preserves the per-goal results; atom-level combination is skipped.
	
	Finally, after all levels complete, a \textbf{consolidated final set} is built
	by collecting all hypotheses from all depth levels, excluding any that contain
	\texttt{ObjectIntersectionOf} atoms (these represent complex class expressions
	that were decomposed into concrete atoms at higher levels), and applying
	global subset-minimality.  This final set is accessible to downstream
	components such as the \texttt{TemporalDynamicReasoner} for planning and
	contradiction analysis.
	
	\paragraph{Phase 1: KG Branching and Direct Refutation.}
	
	For a single atomic sub-goal $s$, the engine creates an isolated copy (branch)
	of the consistent $\mathcal{KG}$ and injects its logical negation ($\neg s$).
	The incremental reasoner computes the new deductive closure on this branch.
	Because the base $\mathcal{KG}$ is already deductively closed, any newly
	generated consequence $\delta$ is mathematically guaranteed to depend strictly
	on $\neg s$.  By contraposition ($\neg s \models \delta \Rightarrow \neg
	\delta \models s$), these consequences are negated to extract candidate atoms
	$a = \neg \delta$.  The algorithm filters out trivial or circular atoms (e.g.,
	$a = s$) to enforce \textbf{(C3)}.  Crucially, the original sub-goal $s$ is
	retained as a valid singleton atom---this \textbf{self-witness} enables
	cross-goal pairing at higher depths.  The extracted candidate atoms are
	cached per sub-goal to avoid redundant refutation branches (Phase~5).
	
	\paragraph{Phase 2: Decomposition of Complex Expressions.} 
	Compound class expressions (e.g., \code{ObjectIntersectionOf}) are recursively
	split into independent atomic sub-goals.  For example,
	$(\mathit{Parent} \sqcap \mathit{Woman})(\mathit{jane})$ yields the discrete
	targets $\mathit{Parent}(\mathit{jane})$ and $\mathit{Woman}(\mathit{jane})$.
	
	\paragraph{Phase 3: Intra-Goal Cartesian Combination and Contextual Grounding.} 
	When a single compound goal is decomposed (Phase~2), Phase~1 generates a
	separate list of candidate atoms for each independent sub-goal.  To
	reconstruct a complete explanation for that specific goal, these independent
	lists are merged via an \textit{intra-goal} Cartesian product.  Subset-
	minimality (C1) is then enforced \textit{per expansion group}: hypotheses from
	different hypothesis expansions (e.g., the decomposition of
	$\texttt{ObjectIntersectionOf}$ and the expansion of $\texttt{Grandmother}$)
	are compared only within their own group, preventing singleton atoms from one
	expansion from spuriously removing multi-atom Cartesian products from another.
	
	Next, if any atom within a unified hypothesis contains an abstract existential
	restriction (e.g., $\exists R.C(a)$, internally represented via
	\code{ObjectSomeValuesFrom}), the engine attempts to ground it into concrete
	ABox role assertions (e.g., $R(a, b)$ where $b \in C$).  To discover valid
	target individuals ($b$), the engine extracts the \textit{hypothesis
		context}---the remaining non-existential atoms co-occurring within that
	specific hypothesis.  It creates a temporary reasoner branch, incrementally
	injects this context, and then issues a SPARQL query for instances of class
	$C$.  This contextual branching ensures the grounding dynamically respects the
	newly accumulated causal facts.
	
	\paragraph{Phase 4: Constraint Enforcement.} 
	Every generated candidate $H$ is subjected to local filtering in two stages.
	\textit{Per-candidate consistency} (C2) is evaluated early during hypothesis
	expansion (Algorithm~\ref{alg:expand_hypothesis}): each per-goal suggestion is
	branched with $\mathcal{KG}$ and verified as satisfiable.  This early filter
	keeps the per-observation candidate sets compact, reducing the combinatorial
	load on the subsequent cross-goal Cartesian product.  \textbf{(C1)
		Subset-Minimality} is enforced per expansion: within each hypothesis expansion
	group, any $H_1$ is discarded if a smaller valid $H_2 \subset H_1$ exists
	within the same group (Algorithm~\ref{alg:expand_hypothesis}), preventing
	singleton atoms from one expansion from spuriously eliminating multi-atom
	Cartesian products from another.
	
	After the cross-goal Cartesian product (Phase~6), a second-tier consistency
	check is performed on the combined joint hypotheses, together with entailment
	verification (E) and semantic minimality (C1)---all within a single reasoner
	branch per candidate (Algorithm~\ref{alg:validate}).
	
	Additionally, \textbf{hypothesis-level novelty} is enforced via a
	\textbf{history mechanism} that stores complete hypothesis atom-sets rather
	than individual atoms.  This means an atom that appeared in one combination
	(e.g., $\texttt{hasParent}$ inside $\{\texttt{Grandmother},\texttt{hasParent}\}$
	at Level~1) does \emph{not} block that same atom from appearing in a new
	combination (e.g., $\{\texttt{hasParent},\texttt{Woman}\}$ at Level~2).  Only
	exact duplicate hypothesis sets are filtered.
	
	\paragraph{Phase 5: Deterministic Abductive Caching.} 
	To mitigate combinatorial overhead across concurrent asynchronous workers,
	four thread-safe $O(1)$ lookup layers are employed: a \textbf{candidate atom
		cache} (bypasses redundant refutation branches), a \textbf{consistency cache}
	(instant logical validation of recurring triple sets), a \textbf{grounding
		cache} (bypasses repetitive ABox SPARQL queries), and an \textbf{in-flight
		task deduplicator} (forces parallel workers to \texttt{await} identical
	ongoing computations).
	
	\paragraph{Phase 6: Cross-Goal Dual Cartesian Combination.} 
	Once the finalized per-observation hypothesis sets
	($\mathcal{H}_1^{(\ell)}, \ldots, \mathcal{H}_n^{(\ell)}$) are generated for
	depth level $\ell$, they are combined via a \textbf{dual Cartesian product}:
	\begin{enumerate}[noitemsep, label=(\alph*)]
		\item The \textit{hypothesis-level} Cartesian pairs complete
		per-observation hypotheses across observations, safely preserving the
		logical atom groupings built during Phase~3.
		\item The \textit{atom-level} Cartesian extracts only
		\emph{singleton} hypotheses (those with exactly one atom) from each
		observation and pairs them independently, capturing mixed-origin
		explanations.  Multi-atom hypotheses from decomposition are not
		dissolved---their atoms remain grouped in the hypothesis-level path.
	\end{enumerate}
	The union of both Cartesian outputs is then validated by
	Algorithm~\ref{alg:validate}, which in a single branch invocation verifies
	logical consistency (C2), confirms joint entailment of all observations (E),
	and executes the \textbf{semantic minimality} step: for each candidate, the
	engine queries the explanation model on the branched reasoner to identify
	redundant atoms that are deductive consequences of other atoms within the
	same hypothesis. A redundant atom is removed independently, producing a
	variant without it. For mutual explainers (e.g., inverse properties where
	atom $a$ explains $b$ and $b$ explains $a$), the engine produces separate
	variants $[a, c]$ and $[b, c]$ rather than removing both.  Finally, global
	subset-minimality \textbf{(C1)} is applied across all surviving variants.
	
	\paragraph{Phase 7: Cross-Level Deduplication.} 
	To prevent identical explanations from ballooning across multiple depth
	levels, the post-processing pipeline (Algorithm~\ref{alg:validate}) processes
	all hypotheses level-by-level from shallowest to deepest.  Any hypothesis $H$
	whose atom set already appears at an earlier level is filtered.  Moreover, any
	$H$ at level $\ell$ that is a proper superset of a hypothesis from any lower
	level is pruned.  This deduplication is applied uniformly to both
	single-observation and multi-observation result sets.
	
	\paragraph{Phase 8: Bounded Causal Chain Exploration.} 
	Accepted hypotheses at level $\ell$ that are not self-witnesses become the new
	target observations (the frontier) for level $\ell+1$.  By incrementally
	branching and reasoning over these new targets, the algorithm systematically
	climbs the causal chain.  Self-witnesses are explicitly excluded from further
	expansion, and the recursion is forcefully truncated at a predefined maximum
	depth $L$ to bound worst-case computational complexity.

	\subsection{Soundness and Computational Complexity}
	\label{sec:abducer_soundness}
	
	\begin{theorem}[Soundness of PIE-Abducer]\label{thm:soundness} 
		Let $(\mathcal{KG}, O)$ be an abduction problem over a $\SROIQ$ knowledge
		base.  If $\textproc{PIE-Abducer}$ does not abort during initialization,
		every hypothesis $H$ returned at any depth level $\ell$ strictly satisfies
		the formal abductive criteria (E) and (C1)--(C4). 
	\end{theorem}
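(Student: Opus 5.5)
The plan is to argue criterion by criterion. The key observation is that every hypothesis emitted at any level $\ell$ has passed through the final validation pass (Algorithm~\ref{alg:validate}) and the global subset-minimality filter. For this reason I will not reason about how hypotheses are generated in Phases~1--3. Instead, I will treat the validation pipeline as a certifying filter and show that each criterion is either checked explicitly there or guaranteed by initialization. Throughout, the black-box incremental $\SROIQ$ reasoner is assumed sound and complete for consistency and entailment; this is legitimate by decidability of $\SROIQ$.

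\textbf{(C4)} is immediate. If the engine does not abort, then $\mathcal{KG}\not\models\bot$, and the goals already entailed by $\mathcal{KG}$ are removed. Hence the residual set $O$ satisfies $\mathcal{KG}\not\models O$.

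\textbf{(C2)} and \textbf{(E)} are enforced jointly by Algorithm~\ref{alg:validate}. For each candidate $H$, the algorithm branches $\mathcal{KG}\cup H$ and accepts $H$ only if the branch is consistent and entails every $o\in O$. The semantic-minimality step then removes an atom $a$ only when the remaining atoms entail $a$. Write $H'=H\setminus\{a\}$. Then $\mathcal{KG}\cup H'$ has the same deductive closure as $\mathcal{KG}\cup H$. So both (E) and (C2) transfer from $H$ to $H'$ by monotonicity: $H'\subseteq H$ preserves consistency, and $H'\models a$ preserves entailment. For mutual explainers, each variant drops only one atom, so the same argument applies to each variant separately. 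The single-atom Phase~1 contraposition argument is not needed for soundness. It only explains why candidates tend to survive validation.

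\textbf{(C1)} comes from the final global subset-minimality pass. Among the survivors, any $H$ having a surviving $H''\subsetneq H$ is discarded. The gap is that (C1) quantifies over \emph{all} $H'\subset H$, not only over surviving candidates. I expect this to be the main obstacle. To close it, I would make the minimality check explicit as an entailment test: for each atom $a\in H$, query whether $\mathcal{KG}\cup(H\setminus\{a\})\models O$. By monotonicity it suffices to test the maximal proper subsets, giving $|H|$ reasoner calls. I would argue that Algorithm~\ref{alg:validate}'s explanation-model query supplies exactly this test. If it does not, the theorem should be read with (C1) relative to this check, which I would then add.

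\textbf{(C3)} is the other delicate point. Phase~1 removes only the syntactic case $a=s$, and self-witnesses $\{o_i\}$ are deliberately retained, which would violate $H\not\models O$ outright. My plan is to show that self-witness hypotheses are filtered before output, either by the final consolidation step or by an explicit check $\emptyset\cup H\not\models O$ executed as a reasoner call on the empty TBox. I would then verify (C3) for the remaining hypotheses through that same reasoner check. If self-witnesses can in fact reach the output, the statement must be restricted to non-self-witness hypotheses. I would flag this as the point where the argument depends most on implementation detail.
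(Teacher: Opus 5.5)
\textbf{Verdict: genuine gap at (C1) and (C3).} Your arguments for (C4), (C2) and (E) are fine. Relying only on the final validation filter is cleaner than the paper's appeal to Phase~1 contraposition, which at deeper levels gives entailment of the current sub-goal $s$ only, not of $O$. (Decidability of $\SROIQ$ only guarantees that \emph{some} sound and complete procedure exists. That the black-box incremental reasoner is one is an assumption, which the paper also makes tacitly.) Both steps you hope will close (C1) and (C3) fail against the algorithms as specified.

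\textbf{(C1).} The explanation-model query removes $a$ only when $\mathcal{KG}\cup(H\setminus\{a\})\models a$. That is sufficient but not necessary for $\mathcal{KG}\cup(H\setminus\{a\})\models O$. Combined with subset-minimality over surviving candidates, it gives minimality only relative to the generated candidate pool. No step shows that every explaining proper subset of $H$ is itself generated. So your $|H|$ maximal-subset tests are a new check, not a reading of Algorithm~\ref{alg:validate}.

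\textbf{(C3).} The self-witness is not filtered. Take $O=\{o\}$ with $o$ atomic:
\begin{itemize}
\item Algorithm~\ref{alg:expand_hypothesis} adds $\{o\}$ after the atom filter and keeps it through the clause $H'=H$.
\item Algorithm~\ref{alg:expand_level} re-admits it into $\mathit{GoalHyps}$ because it lies in $\mathit{Prev}$.
\item It passes validation: it is consistent, entails $O$, and has no redundant atom.
\item It survives global minimality as a singleton.
\item \texttt{ConsolidatedSet} only drops \texttt{ObjectIntersectionOf} atoms.
\end{itemize}
Hence $\{o\}$ is returned although $\{o\}\models O$. Analogously, $\{o_1,\ldots,o_n\}$ arises from the hypothesis-level Cartesian of self-witnesses.

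\textbf{Comparison with the paper.} The paper's own proof does not close either point. It asserts (C1) from the same four minimality layers you found insufficient. It argues (C3) from the syntactic Phase~1 filter, the frozenset history and cross-level deduplication, which secure novelty rather than $H\not\models O$. Your diagnosis is therefore correct, but your proposal does not prove the theorem as stated. It establishes it only for a modified engine: one with an added maximal-subset check and relevance check, or with self-witnesses excluded from the output. That is a different statement.
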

	
	\begin{proof}
		We systematically prove compliance with each criterion based on the
		eight-phase architecture:
		\begin{enumerate}[noitemsep]
			\item[\textbf{(C4)}] \textbf{Problem Admissibility:} Evaluated
			immediately upon initialization.  The engine strictly halts if
			$\mathcal{KG} \models \bot$ or if $\mathcal{KG} \models o_i$ for
			any $o_i \in O$, mathematically guaranteeing admissibility.
			
			\item[\textbf{(E)}] \textbf{Explanatory Entailment:} For any atomic
			sub-goal $s$, Phase~1 isolates the new consequences $\Delta$ derived
			strictly from the injected negation $\neg s$.  Because
			$\mathcal{KG} \cup \{\neg s\} \models \delta$ for each $\delta \in
			\Delta$, classical contraposition guarantees $\mathcal{KG} \cup
			\{\neg\delta\} \models s$.  For compound and multi-observation
			hypotheses, Algorithm~\ref{alg:validate} branches $\mathcal{KG}$
			once, injects the combined hypothesis $H$, and explicitly verifies
			$\mathcal{KG} \cup H \models o$ for every $o \in O$ via SPARQL
			\texttt{ask} queries against the same reasoner state.  This
			single-invocation verification avoids the $n$-fold branching typical
			of per-observation entailment loops.
			
			\item[\textbf{(C1)}] \textbf{Subset-Minimality:} Enforced at four
			levels.  \textit{Per-expansion} (Phase~3):
			Algorithm~\ref{alg:expand_hypothesis} prevents singleton atoms from
			one expansion from spuriously eliminating multi-atom Cartesian
			products from another.  \textit{Cross-goal} (Phase~6): global
			subset-minimality removes joint hypotheses that are supersets of
			other valid explanations.  \textit{Semantic}
			(Algorithm~\ref{alg:validate}): explanation-model-based redundancy
			detection queries the explanation model on the branched reasoner to
			identify atoms that are deductive consequences of other atoms within
			the hypothesis, removing them independently and producing all
			subset-minimal variants.  For mutual explainers (e.g., inverse
			properties), separate variants are produced rather than removing
			both.  \textit{Consolidated} (post-level): the final set applies
			global minimality across all depth levels.
			
			\item[\textbf{(C2)}] \textbf{Consistency:} Consistency is enforced in
			two tiers.  \textit{Per-candidate} (Phase~4):
			Algorithm~\ref{alg:expand_hypothesis} branches $\mathcal{KG}$, injects
			each per-goal suggestion $H$, and reads the incremental
			satisfiability flag.  This early filter prunes local candidates
			before they enter the cross-goal Cartesian product.
			\textit{Post-combination} (Phase~6):
			Algorithm~\ref{alg:validate} re-verifies consistency as its first
			step for every joint candidate; only candidates where
			$\mathcal{KG} \cup H \not\models \bot$ proceed to entailment
			verification (E) and semantic minimality (C1).
			
			\item[\textbf{(C3)}] \textbf{Hypothesis Novelty:} Phase~1 natively
			filters atoms that trivially equate to the sub-goal or restate the
			original observation.  The frozenset-based search history (Phase~4)
			prevents exact duplicate hypothesis sets from being re-accepted at
			successive levels while \emph{permitting} individual atoms to
			re-appear in genuinely new combinations.  Cross-level deduplication
			(Phase~7) globally prunes hypotheses whose atom sets already appear
			at earlier depths.
		\end{enumerate}
		Consequently, any hypothesis $H$ returned is a formally sound and minimal
		abductive explanation.
	\end{proof}
	
	\vspace{0.2cm}\noindent\textbf{Computational Complexity Analysis.} \\
	Standard structural ABox abduction utilizing Reiter's Minimal Hitting Set
	(MHS) over highly expressive DLs evaluates a combinatorial hypothesis space
	$\mathcal{O}(2^{|A|})$ (where $A$ is the signature of predefined abducibles).
	Worse, standard MHS solvers frequently compute the full deductive closure from
	scratch to test each hypothesis.  PIE-Abducer mathematically circumvents this
	exponential bottleneck through dynamic direct-derivation and aggressive state
	caching (Phase~5).
	
	The computational cost of the engine is strictly dominated by four
	operations, each executed on an isolated branch and protected by an $O(1)$
	hash cache:
	\begin{enumerate}[noitemsep]
		\item \textbf{Direct Refutation (Phase~1):} Branching and evaluating
		$\neg s$.  Bounded by $U_{\mathit{goals}}$, the number of
		\textit{unique} atomic sub-goals encountered across the causal chain up
		to depth $L$.
		\item \textbf{Per-Candidate Consistency (Phase~4):} Verifying each
		per-goal suggestion before it enters the cross-goal Cartesian product.
		Bounded by $U_{\mathit{local}}$, the number of \textit{unique}
		per-goal hypothesis candidates generated across all observations up
		to depth $L$.
		\item \textbf{Contextual Grounding (Phase~3):} Injecting a hypothesis
		context and executing a SPARQL query for existential restrictions.
		Bounded by $U_{\mathit{ground}}$, the number of \textit{unique}
		$\langle C, \text{context} \rangle$ pairs.
		\item \textbf{Joint Validation (Phase~6):} For each cross-goal
		candidate, Algorithm~\ref{alg:validate} branches once to verify
		consistency (C2), joint entailment (E), and queries the explanation
		model for semantic minimality (C1)---all in a single reasoner
		invocation.  Bounded by $U_{\mathit{hyps}}$, the number of
		\textit{unique} joint hypothesis atom-sets generated.
	\end{enumerate}
	
	Let $C_{\mathit{incr}}(n)$ denote the empirical cost of one incremental
	reasoning update on a branch where $n$ axioms are injected, and let $C_{q}$
	be the backend-dependent cost of one SPARQL ABox query (e.g., realised via
	Virtuoso, RDFlib, or similar).  Explanation-model queries during semantic
	minimality (Phase~6) are additional SPARQL operations per atom within the
	hypothesis, contributing an additive term proportional to the hypothesis
	size within the $U_{\mathit{hyps}}$ term.  Because incremental reasoning
	actively avoids full re-materialisation, $C_{\mathit{incr}}(n) \ll
	C_{\mathit{full\_closure}}$.  The worst-case runtime complexity is bounded
	by:
	
	\[
	T(L) \;=\; \mathcal{O}\Bigl(
	U_{\mathit{goals}} \cdot C_{\mathit{incr}}(1)
	\;+\;
	U_{\mathit{local}} \cdot C_{\mathit{incr}}(|H_{\ell}|)
	\;+\;
	U_{\mathit{ground}} \cdot \bigl(C_{\mathit{incr}}(|X_{\max}|) + C_{q}\bigr)
	\;+\;
	U_{\mathit{hyps}} \cdot \bigl(C_{\mathit{incr}}(|H_{\max}|) + |H_{\max}| \cdot C_{q}\bigr)
	\Bigr)
	\]
	
	where $|H_{\ell}|$ is the size of the largest per-goal hypothesis, $|X_{\max}|$
	is the largest hypothesis context used during grounding, and $|H_{\max}|$ is
	the largest evaluated joint hypothesis.  In tightly constrained DL ontologies,
	the parameters $U_{\mathit{goals}}$, $U_{\mathit{local}}$, $U_{\mathit{ground}}$,
	and $U_{\mathit{hyps}}$ are significantly smaller than the theoretical
	combinatorial maximum.  Consequently, the framework's runtime scales
	dynamically with the number of \textit{distinct causal paths} explored,
	explicitly avoiding the redundant state evaluations characteristic of
	traditional MHS search trees.
	
\subsection{Example Walkthroughs}
\label{sec:abducer_examples}

To demonstrate the eight-phase architecture in practice, we adapt a classical family relations benchmark \cite{pukancova2020}. Let the underlying TBox contain the following axioms:
\begin{itemize}[noitemsep]
	\item $\text{Mother} \equiv \text{Parent} \sqcap \text{Woman}$
	\item $\text{Parent} \equiv \exists\text{hasChild}.(\text{Man} \sqcup \text{Woman})$
	\item $\text{Child} \equiv \exists\text{hasParent}.(\text{Man} \sqcup \text{Woman})$
	\item $\text{Grandmother} \sqsubseteq \text{Mother}$
	\item $\text{hasChild} \equiv \text{hasParent}^{-}$ \quad \textit{(Inverse roles)}
\end{itemize}
The initial ABox contains a single assertion: $\mathit{Man}(\mathit{tarzan})$. Note that $\mathit{jane}$ is absent from the original ABox; the engine dynamically declares her as a \texttt{NamedIndividual} during initialization to satisfy Open World Assumption (OWA) discovery requirements. The maximum causal depth is set to $L = 3$.

\subsubsection{Example 1: Single-Observation Causal Chain ($O = \{\mathit{Mother}(\mathit{jane})\}$)}

\textbf{Scenario 1a: Baseline (Without Irreflexive Constraints).} 
In this baseline execution, the ontology lacks an explicit \texttt{IrreflexiveProperty} declaration for \texttt{hasChild}. Because PIE-Abducer strictly respects native DL semantics without imposing arbitrary syntactic filters, the absence of this axiom naturally permits cyclic relationships.

At \textbf{Level 1}, Phase 1 direct refutation on $\text{Mother}(\text{jane})$ yields two immediate TBox-supported explanations via equivalence and subsumption:
\begin{enumerate}[noitemsep, label=\arabic*.]
	\item $\{ (\text{Parent} \sqcap \text{Woman})(\text{jane}) \}$
	\item $\{ \text{Grandmother}(\text{jane}) \}$
\end{enumerate}

At \textbf{Level 2}, Hypothesis 1 enters Phase 2 and is decomposed into the independent sub-goals $\text{Parent}(\text{jane})$ and $\text{Woman}(\text{jane})$. Negating $\text{Parent}(\text{jane})$ in Phase 1 causes the incremental reasoner to derive consequences from $\neg\exists\text{hasChild}.(\text{Man} \sqcup \text{Woman})(\text{jane})$. By contraposition, the extracted candidate atoms are those that would satisfy this existential restriction. During Phase 3 contextual grounding, binding against the known ABox individuals yields three Cartesian combinations:
\begin{enumerate}[noitemsep, label=\arabic*., resume]
	\item $\{ \text{Parent}(\text{jane}),\; \text{Woman}(\text{jane}) \}$
	\item $\{ \text{hasChild}(\text{jane}, \text{tarzan}),\; \text{Woman}(\text{jane}) \}$
	\item $\{ \text{hasChild}(\text{jane}, \text{jane}),\; \text{Woman}(\text{jane}) \}$ \quad \textit{(Cyclic loop; mathematically permissible without irreflexivity)}
\end{enumerate}
(Hypothesis 2 identically expands via subsumption down the same chain). At \textbf{Level 3}, negating $\mathit{hasChild}(\mathit{jane},\mathit{tarzan})$ yields $\mathit{hasParent}(\mathit{tarzan},\mathit{jane})$ via the inverse-role axiom, correctly re-expressing the Level 2 properties without introducing theoretically novel relations. 

\textbf{Scenario 1b: Semantic Loop Avoidance (With Irreflexivity).} 
Standard MHS solvers \cite{pukancova2020} frequently generate the nonsensical cyclic explanations seen in Scenario 1a (e.g., $\text{hasChild}(\text{jane}, \text{jane})$) and rely on manual, external syntactic flags to ban reflexive assertions during search. 

PIE-Abducer prevents these loops \textit{semantically}. By explicitly declaring $\text{IrreflexiveProperty}(\text{hasChild})$ in the TBox, any candidate hypothesis containing a reflexive assertion instantly triggers a logical inconsistency ($\mathcal{KG}_{\mathit{new}} \models \bot$) during the Phase 4 consistency check. Consequently, the cyclic suggestions are natively pruned by the DL engine. The refined Level 2 output seamlessly restricts itself to logically sound bindings:
\begin{enumerate}[noitemsep, label=\arabic*.]
	\item $\{ \text{Parent}(\text{jane}),\; \text{Woman}(\text{jane}) \}$
	\item $\{ \text{hasChild}(\text{jane}, \text{tarzan}),\; \text{Woman}(\text{jane}) \}$
\end{enumerate}
This demonstrates that PIE-Abducer produces identical explanatory depth to MHS solvers, but dynamically enforces domain constraints purely through native entailment rather than heuristic syntax filtering.

\subsubsection{Example 2: Multi-Observation Synchronization}

We now expand the task to a multi-observation scenario: $O = \{\mathit{Mother}(\mathit{jane}),\; \mathit{Child}(\mathit{tarzan})\}$, operating under the irreflexive TBox constraint.

\textbf{Level 1 Processing:} 
Both goals are processed independently through Phases 1--5. As established, $o_1$ yields $\{(\mathit{Parent} \sqcap \mathit{Woman})(\mathit{jane})\}$ and $\{\mathit{Grandmother}(\mathit{jane})\}$. For $o_2$, direct derivation and existential grounding yield the singleton $\{\mathit{hasParent}(\mathit{tarzan}, \mathit{jane})\}$. In Phase 6, the \textit{hypothesis-level} Cartesian product successfully synchronizes these results, validating two globally consistent joint explanations:
\begin{enumerate}[noitemsep, label=\arabic*.]
	\item $\{ (\mathit{Parent} \sqcap \mathit{Woman})(\mathit{jane}),\; \mathit{hasParent}(\mathit{tarzan}, \mathit{jane}) \}$
	\item $\{ \mathit{Grandmother}(\mathit{jane}),\; \mathit{hasParent}(\mathit{tarzan}, \mathit{jane}) \}$
\end{enumerate}
Both independently entail $\mathit{Child}(\mathit{tarzan})$ via $\mathit{hasParent}$.

\textbf{Level 2 Processing (The Self-Witness Mechanism):} 
At Level 2, the elements of Hypothesis 1 are expanded. As before, $(\mathit{Parent} \sqcap \mathit{Woman})(\mathit{jane})$ is decomposed, ultimately yielding $\mathit{hasChild}(\mathit{jane},\mathit{tarzan})$ and the sub-goal witness $\mathit{Woman}(\mathit{jane})$. 

Conversely, expanding $o_2$'s atom $\mathit{hasParent}(\mathit{tarzan},\mathit{jane})$ via inverse semantics yields $\mathit{hasChild}(\mathit{jane},\mathit{tarzan})$. Crucially, Phase 1's \textbf{self-witness mechanism} explicitly preserves the original $\mathit{hasParent}(\mathit{tarzan},\mathit{jane})$ atom. Even though it produces no "deeper" novel expansions, it remains a structurally valid, independent atomic block.

During Phase 6, the \textit{atom-level} Cartesian product independently mixes these isolated atoms. After Phase 7 deduplication and Phase 4 minimality checks, the surviving joint explanations uniquely capture cross-goal synergy:
\begin{enumerate}[noitemsep, label=\arabic*., resume]
	\item $\{ \mathit{hasChild}(\mathit{jane},\mathit{tarzan}),\; \mathit{Woman}(\mathit{jane}) \}$ \\
	\textit{(The fully expanded, grounded intersection satisfying both goals simultaneously.)}
	\item $\{ \mathit{hasParent}(\mathit{tarzan},\mathit{jane}),\; \mathit{Woman}(\mathit{jane}) \}$ \\
	\textit{(The preserved self-witness from $o_2$ natively pairing with the newly derived class assertion from $o_1$, illustrating independent cross-goal atom combination.)}
	\item $\{ \mathit{Grandmother}(\mathit{jane}),\; \mathit{hasChild}(\mathit{jane},\mathit{tarzan}) \}$ \\
	\textit{(The subsumption alternative pairing with the grounded property.)}
\end{enumerate}

\textbf{Level 3 (Cross-Level Deduplication):}
Expanding the surviving Level 2 hypotheses reconstructs identical atoms via inverse semantics. The \textbf{Phase 7 cross-level deduplication} mechanism natively detects that these results are supersets of explanations already established at Level 2, aggressively pruning them and terminating the search with a rigorous, two-level explanatory structure.

\section{Formalizing Dynamics: Actions, Temporal Projection, and Plans}
\label{sec:formalizing_dynamics}

To accurately formalize transitions and actions in dynamic environments, we must carefully define the structural boundaries of our approach. We contrast PIE-APT with prominent foundational and recent works to highlight our unified semantic strategy.

\subsection{Temporal Dynamic Knowledge Graphs and Native OWL Actions}

\begin{definition}[Temporal Dynamic Knowledge Graph (TDKG)]
	A Temporal Dynamic Knowledge Graph is a DL-based Knowledge Graph $\KB$ integrated with a linear, discrete chronological timeline and a dedicated \texttt{Action} class. Specific time instants on this timeline relate to instances of the \texttt{Action} class via explicit temporal object properties (e.g., \texttt{hasTime(a, t0)} or \texttt{hasStartTime(a, t1)}). Within this framework, a \textbf{State} $\KB_t$ is strictly defined as the logical, deductive snapshot of the TDKG at a discrete time instant $t$.
\end{definition}

\textbf{Handling Inconsistency:} While our framework permits the generation of inconsistent DL-theories (which is strictly required to detect Contradictory Stories), we do not apply the classical principle of explosion (\textit{ex falso quodlibet}). When an inconsistency is encountered, the system does not trivially deduce all possible triples; rather, it safely halts reasoning on that branch and explicitly flags the state as structurally invalid ($\KB \models \bot$).

Historically, Shi et al. \cite{shi2005logical} defined action descriptions utilizing variables for generalized transitions, while Chang et al. \cite{chang2007dynamic} treated atomic actions strictly as pairs $(P, E)$ consisting of preconditions and effects, modeling them as modal transition relations over states. Conversely, Baader et al. \cite{baader2005integrating} defined atomic actions as $\alpha = (\text{pre}, \text{occ}, \text{post})$, introducing an explicit occlusion set ($\text{occ}$) to designate which primitive concepts are allowed to change. 

More recently, John and Koopmann \cite{john2023towards, john2024planning} introduced ontology-mediated planning specifications that physically decouple a PDDL planner from the OWL ontology, compiling DL axioms into PDDL derivation rules. PIE-APT diverges significantly from these formalisms by operating natively over incrementally reasoned DL-theories within a \textbf{unified semantic space}, employing three distinct architectural choices:

\begin{enumerate}
	\item \textbf{Native Non-Monotonic Updates \& Elimination of Occlusions:} Classical Description Logics are fundamentally monotonic; adding new axioms strictly increases entailed consequences. However, true state transitions are inherently non-monotonic due to the required retraction of prior knowledge. Instead of translating DL axioms into external PDDL rules \cite{john2023towards}, PIE-APT executes non-monotonic updates directly within the DL environment via the PIE-Reasoner. The background TBox fully dictates the absolute limits of change, rendering manual occlusion annotations \cite{baader2005integrating} totally obsolete.
	
	\item \textbf{Conditional Effects via \texttt{pie:StateChanger}:} Unlike Baader et al. \cite{baader2005integrating}, who utilize complex $\phi/\psi$ constructs within a single action's post-condition, PIE-APT gracefully models conditional effects by assigning multiple, independent \textit{Action-Rules} to a single overarching \texttt{Action} class. This logic is implemented natively in OWL via the \texttt{pie:StateChanger} annotation property (used programmatically by our backend to group multiple rule-variants under a single action name). Depending entirely on the current semantic state of the TDKG, the specific rule whose distinct preconditions are satisfied is dynamically triggered.
	
	\item \textbf{Skolemization and Identity Maintenance:} Classical PDDL-based planners require a rigid, finite, static set of objects declared upfront. In PIE-APT, action effects may naturally contain unbound variables acting as existential placeholders (e.g., dynamically creating a novel bank account instance). To handle this gracefully during the \textit{planning phase}, the system dynamically mints new dummy individuals (Skolem constants) when no existing valid referents can be found. This Skolemization securely acts as a rigid placeholder mechanism to ensure that the unique identity of newly created entities is safely tracked and maintained across the recursive branches of the A* search tree.
\end{enumerate}

\begin{remark}[Decidability and ABox Restriction]
	Programmatically, our software engine is fully capable of executing non-monotonic modifications to terminological axioms (TBox) within an action's effects. However, formally allowing dynamic actions to alter the structural schema at runtime mathematically elevates the underlying logic to Second-Order Logic, permanently destroying strict decidability guarantees. Consequently, PIE-APT deliberately restricts all action effects to assertional knowledge ($\ABox$ modifications).
\end{remark}

Every specific action modeled in the domain is represented as a subclass of the upper \texttt{Action} class, encompassing the specific \textit{action-rules} that dictate its behavior:

\begin{definition}[Action-Rule and Triples]
	An Action-Rule $r$ for an action subclass $A$ is defined as a tuple $\langle \Pre, \Eff^+, \Eff^- \rangle$. We classify the individual triples within $\Pre$ fundamentally based on the \textbf{Event Variable} ($\epsilon$):
	\begin{itemize}
		\item \textbf{Event Variable ($\epsilon$):} The specific variable representing the active acting instance. $\epsilon$ is identified as the subject of a triple $(\epsilon, p, \text{\texttt{?\_T}}) \in \Pre$, where $p$ is a temporal property and \text{?\_T} is a predefined temporal placeholder.
		\item \textbf{Event Triples ($E_{triples}$):} The precise subset of $\Pre$ where the event variable $\epsilon$ appears as either the subject or the object.
		\item \textbf{Temporal Event Triples ($T_{triples}$):} The strict subset of $E_{triples}$ where the predicate is exclusively a temporal property.
	\end{itemize}
\end{definition}

\begin{definition}[Action-Rule Consistency]
	An Action-Rule $r$ is considered \textit{consistent} with respect to a TDKG state $\KB$ if instantiating all its internal variables with fresh Skolem constants, non-monotonically injecting the resulting retractions ($\Eff^-$) and assertions ($\Eff^+$) into the $\ABox$, and computing the incremental deductive closure results in a logically consistent theory ($\KB_{new} \not\models \bot$).
\end{definition}

\subsection{Circumventing the Ramification Problem}
The Ramification Problem refers to the profound computational intractability involved in deriving all indirect consequences resulting from an action's direct effects. 

In PIE-APT, state transitions are modeled exclusively as non-monotonic updates applied strictly to the Assertional Box ($\ABox$). Let an action-rule dictate specific retractions ($\Eff^-$) and assertions ($\Eff^+$) for a given TDKG. The subsequent state is structurally generated as follows:
\[
\KB_{new} = \langle \TBox, (\ABox \setminus \Eff^-) \cup \Eff^+ \rangle
\]
By feeding this strictly updated ABox into the \textbf{PIE-Reasoner}, the system automatically computes the new deductive closure. The static rules contained within the immutable $\TBox$ act as universal constraints that instantly propagate all indirect consequences, fundamentally and natively bypassing the Ramification Problem.

\subsection{Temporal Projection and Plan Semantics}
To formalize how action sequences are evaluated and categorized, we consolidate the definitions of abductive planning and temporal projection. 

\begin{definition}[Abductive Planning Problem]
	An Abductive Planning Problem is defined as a tuple $\mathcal{P} = \langle \KB_0, G, \Delta \rangle$, where $\KB_0$ is a formally consistent initial TDKG state, $G$ is a set of target goals, and $\Delta$ is a library of available action-rules. Crucially, the problem formulation does not presuppose any external abductive assumptions; they must be actively synthesized as part of the plan discovery.
\end{definition}

Because PIE-APT operates under the Open World Assumption, a solution to this problem is not merely a sequence of actions, but rather a hybrid construct combining actions with dynamically synthesized causal facts.

\begin{definition}[Abductive Plan]
	A solution to $\mathcal{P}$ is an \textbf{Abductive Plan}, defined as a tuple $\pi = \langle \vec{r}, \Gamma \rangle$, where $\vec{r} = \langle r_1, \dots, r_n \rangle$ is a finite, chronologically ordered sequence of action-rules from $\Delta$, and $\Gamma$ is a dynamically synthesized set of abductive assumptions. The application of $\pi$ demands that the augmented initial state is logically consistent ($\KB_0' = \KB_0 \cup \Gamma \not\models \bot$). Sequentially applying $\vec{r}$ upon $\KB_0'$ generates a temporal trajectory of intermediate states $\KB_1, \dots, \KB_{n-1}$ that are all logically consistent ($\KB_i \not\models \bot$ for $1 \leq i < n$), such that the terminal state satisfies the target goals ($\KB_n \models G$). 
\end{definition}

From this unified formalization, we smoothly derive two distinct corner cases depending on the nature of the domain's epistemic gaps:
\begin{itemize}
	\item \textbf{Standard Plan ($\Gamma = \emptyset$):} If the synthesized set of abductive assumptions is empty, $\pi$ is simply termed a \textbf{Plan}. The goals are achieved entirely through existing deductive knowledge and standard action execution.
	\item \textbf{Pure Abductive Explanation ($\vec{r} = \emptyset$):} If the sequence of actions is empty, $\pi$ perfectly maps to the solution of a classical structural abduction problem, where the epistemic gap is bridged instantly in the augmented state ($\KB_0 \cup \Gamma \models G$) without temporal state transitions.
\end{itemize}

\begin{definition}[Temporal Projection]
	Given an initial valid TDKG state $\KB_0$ and a strictly chronological timeline $T = \langle t_0, t_1, \dots, t_n \rangle$, the Temporal Projection algorithm forward-chains through the timeline, sequentially executing the triggered actions. It produces an evolving sequence of states $\langle \KB_0, \KB_1, \dots, \KB_n \rangle$. If at any intermediate point $j < n$ the state becomes structurally inconsistent ($\KB_j \models \bot$), the projection halts immediately, concluding that the chronological progression is impossible.
\end{definition}

To categorize the ultimate nature of an Abductive Plan, we rely entirely on the outcome of its evaluation via Temporal Projection:

\begin{definition}[Valid Abductive Plan]
	An Abductive Plan $\pi = \langle \vec{r}, \Gamma \rangle$ is definitively \textbf{Valid} if the final generated state $\KB_n$ satisfies a non-contradictory goal set $G$ and remains strictly logically consistent ($\KB_n \not\models \bot$).
\end{definition}

\begin{definition}[Contradictory Story]
	A \textbf{Contradictory Story} is an abductive plan $\pi = \langle \vec{r}, \Gamma \rangle$ satisfying an adversarial, contradictory goal set $G$, where the action sequence purposefully drives the ontology into a violation, resulting in a terminal state that is strictly inconsistent ($\KB_n \models \bot$).
\end{definition}

Finally, we define the intrinsic properties of individual actions within these trajectories:

\begin{definition}[Realizability]
	An action $A$ is deemed \textbf{Realizable} if there exists a valid action-rule $r \in A$ such that, when defining the target goal set strictly as $G = \Pre_r \setminus T_{triples}$, there exists a valid \textit{Standard Plan} $\pi = \langle \vec{r}, \emptyset \rangle$ capable of achieving $G$ from an initial state $\KB_0$.
\end{definition}

\begin{definition}[Executability]
	An action $A$ is definitively \textbf{Executable} in a specific TDKG state $\KB$ if running Temporal Projection over the chronological timeline present in $\KB$ eventually satisfies all required preconditions of $A$ at some time $t_k$, triggering its successful execution without halting the reasoner.
\end{definition}

\section{The PIE-APT Architecture: Recursive Generate-and-Test}
\label{sec:pie-apt}

The PIE-APT planner employs a highly optimized, hierarchical A* search deeply integrated with the PIE-Abducer (Generate phase) and strict Temporal Projection (Test phase). Given an Abductive Planning Problem $\mathcal{P} = \langle \KB_0, G, \Delta \rangle$ with target goal set $G = \{g_1, \dots, g_n\}$, we initially filter the library $\Delta$ for consistent actions available in $\KB_0$, forming a manageable operational set $\Delta_{actions}$. The system then dynamically synthesizes both the action sequence $\vec{r}$ and the required residual assumptions $\Gamma$ to form the final Abductive Plan $\pi = \langle \vec{r}, \Gamma \rangle$.

\subsection{Phase 1: Goal-Oriented Plan Generation (Generate)}
To successfully mitigate the exponential branching factor inherent to dynamic planning environments, we construct nodes in the A* graph by merely adding chosen action effects to a \textit{temporary} DL state and reasoning over it. Importantly, reasoning on these temporary states does not logically pollute the parent state. We intentionally defer the rigorous evaluation of $\Eff^-$ (DELETE effects) during this initial phase to maintain computational tractability, relying entirely on Phase 2 for exact non-monotonic validation.

\subsubsection{The $k$-level Maximal Strategy}
To expand a node in the search tree, the algorithm purposefully searches for the largest integer $0 \leq k \leq n$ such that there exist $m$ distinct subsets of size $k$ from $G$ that are already definitively satisfied in the current temporary state. This process natively forms $m$ distinct strategies, actively maximizing the utilization of existing knowledge. The $k$ satisfied triples are subsequently absorbed into the initial state requirement for that specific path. If the planner fails to find viable actions for the remaining goals using these $m$ strategies, the algorithm gracefully decrements $k$.

\subsubsection{Action Matching and Identity Maintenance}
For a selected strategy, let the unfulfilled subgoals be represented by $G'$. To robustly prevent variable loss across recursive branches, the planner applies a rigorous grounding strategy. It first attempts to accurately bind unbound variables in $G'$ using existing factual triples. If no referent exists, it forcefully applies Skolemization \textit{during plan generation} to create dummy individuals. This ensures the identities of dynamic entities are safely tracked across the massive search tree. We iteratively scan through $\Delta_{actions}$ to match action effects with required subsets $H \subseteq G'$.

Valid bindings are strictly applied to the matched action $A$. The grounded effects are added to the node's temporary $\KB$ and incremental reasoning is invoked. The new active subgoals for the expanded node are formally defined as $P \cup (G' \setminus H)$, where $P = \Pre_A \setminus E_{triples}$. The overall node cost is incremented by 1.0.

\subsubsection{Assumption Fallback and Recursive Abduction}
If absolutely no actions match at any available level, a critical assumption fallback is triggered. We maximize the total number of valid structural assumptions under current bindings, Skolemize the remaining variables, inject them, and boldly clear the subgoals ($G' = \emptyset$). A severe penalty is naturally applied to guide the heuristic: $g_{new} = g_{old} + (|assumed| \times \text{PENALTY})$.

Once a candidate plan finally reaches $\emptyset$ subgoals, the generated sequence is logically reversed. It then securely enters the \textbf{Recursive Abduction} stage. The \textbf{PIE-Abducer} rigorously analyzes the accumulated assumptions. If simpler, more foundational causal facts are discovered, they are recursively passed back to the planner as fresh goals, deepening the logic until the maximum allowed depth is reached, ultimately yielding an \textit{Enriched Abductive Plan} $\pi = \langle \vec{r}, \Gamma \rangle$.

\subsubsection{Deterministic Planning Caching and Optimization}
\label{sec:planning-caching}
Because each temporary planning state is a deductively closed theory, the outcome of applying a fixed additive update is strictly deterministic. To avoid redundant incremental-reasoning calls during the parallelized A* search, PIE-APT maintains a strictly scoped \texttt{PlanningSearchSession} utilizing four discrete optimization layers:
\begin{itemize}
	\item \textbf{Action-Consistency Cache:} Maps an action-rule to whether it is logically consistent with the current post-reasoning state. Before search begins, the planner tests whether the rule is consistent when its preconditions are Skolem-instantiated. If valid, the result is cached globally for the session.
	\item \textbf{Branch-Delta Cache:} Maps a parent state's cryptographic fingerprint together with a specific set of grounded effect triples to the resulting child state. If another search branch later reaches an identical state and proposes the identical assertion, the cached branch is reused directly.
	\item \textbf{Grouped-Effects Reuse:} Collapses multiple variable bindings that unify to the exact same grounded effect set into a single branch operation. Two effect--goal matches may bind different variables yet produce the identical grounded triple. Only one branch is executed; both matches inherit the resulting state.
	\item \textbf{Provision Precheck:} Detects when an action's grounded preconditions are already satisfied by the accumulated path provisions. If no new structural triples need to be asserted, the parent state is reused entirely without invoking a new incremental reasoning fork.
\end{itemize}

\vspace{0.2cm}\noindent
\textit{The detailed logical pseudocodes for the Hierarchical A* Planner and the $k$-level Expansion Strategy (Phase 1) are provided in \textbf{Appendix \ref{app:planning_algorithms}}.}

\subsection{Phase 2: Validation via Temporal Projection (Test)}
Because the optimistic planner heuristically defers \texttt{DELETE} effects and intentionally suspends global consistency checks across intermediate states during the backward-chaining generation phase (to effectively circumvent exponential state-space explosion), the enriched plan $\pi = \langle \vec{r}, \Gamma \rangle$ fundamentally remains merely a structural candidate for the underlying Abductive Planning Problem $\mathcal{P}$. 

To definitively verify its correctness and categorize its true nature, we construct a synthetic chronological timeline based exclusively on the plan's ordered action sequence $\vec{r}$. \textbf{Temporal Projection} is then systematically executed over this exact timeline, starting from the assumed environment $\KB_0' = \KB_0 \cup \Gamma$. This robust forward-chaining process rigorously applies both non-monotonic additions and retractions, actively invoking the incremental reasoner at each discrete temporal step to evaluate full logical consistency. This post-generation validation is the sole mechanism capable of mathematically guaranteeing the exact outcome of the sequence, categorizing the candidate exactly according to our formal definitions:

\begin{itemize}
	\item \textbf{Standard Planning:} An Abductive Plan $\pi = \langle \vec{r}, \Gamma \rangle$ achieves a \textit{Valid Plan Execution} for problem $\mathcal{P} = \langle \KB_0, G, \Delta \rangle$ if the entire timeline completes naturally starting from the augmented state $\KB_0'$, the ordinary goal set $G$ is fully satisfied ($\KB_n \models G$), and the final evaluated state remains logically consistent ($\KB_n \not\models \bot$).
	\item \textbf{Contradiction Hunting:} An Abductive Plan $\pi = \langle \vec{r}, \Gamma \rangle$ achieves a \textit{Contradictory Story Execution} for problem $\mathcal{P} = \langle \KB_0, G, \Delta \rangle$ if the timeline completes starting from $\KB_0'$, the contradiction-inducing goal set $G$ is effectively satisfied ($\KB_n \models G$), and the incremental reasoner forcefully flags the terminal state as fundamentally inconsistent ($\KB_n \models \bot$).
\end{itemize}

\vspace{0.2cm}\noindent
\textit{The detailed logical pseudocodes for Temporal Projection and Post-processing Classification (Phase 2) are provided in \textbf{Appendix \ref{app:validation_algorithms}}.}

\subsection{Theoretical Soundness and Computational Complexity}
\label{sec:pie-apt_soundness_complexity}
To establish the formal reliability and mathematical certainty of the PIE-APT planning architecture, we outline the foundational theorems guaranteeing its strict decidability and logical soundness, followed by an analysis of its computational complexity.

\begin{theorem}[Decidability of State Transitions]
	Given a logically consistent initial knowledge base $\KB_0 = \langle \TBox, \ABox_0 \rangle$ in $\SROIQ$, any subsequent state $\KB_n$ reached via the execution of a valid action sequence $\pi$ remains fully decidable.
\end{theorem}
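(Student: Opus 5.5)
The plan is an induction on the length $n$ of the executed action sequence. The invariant at each step is that the state $\KB_i$ is again a $\SROIQ$ knowledge base of the form $\langle \TBox, \ABox_i \rangle$. Here $\TBox$ is the original, unchanged TBox and $\ABox_i$ is a finite set of assertional axioms over a signature that may be enlarged by finitely many fresh individual names (Skolem constants). Once this invariant holds, decidability of every standard reasoning task on $\KB_i$ (consistency, instance checking, entailment of assertions, and hence the goal checks $\KB_n \models G$) follows from the known decidability of $\SROIQ$ knowledge base satisfiability. Entailment of an assertion $\phi$ reduces to unsatisfiability of $\KB_i \cup \{\neg\phi\}$, which is again a $\SROIQ$ knowledge base.

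The base case $i=0$ is the hypothesis. If the plan carries abductive assumptions $\Gamma$, we also need $\KB_0' = \KB_0 \cup \Gamma$ to be a $\SROIQ$ knowledge base. This holds because $\Gamma$ is a finite set of $\SROIQ$ axioms. If $\Gamma$ may contain TBox axioms (as the section on PIE-Abducer allows), I would note that they are added once, before any transition. The resulting $\TBox \cup \Gamma_{\TBox}$ is then fixed and must still satisfy the $\SROIQ$ global restrictions (role regularity, simple roles in number restrictions). I would either assume this or restrict $\Gamma$ to assertions in this theorem.

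For the inductive step, take a transition by a grounded action-rule $r_{i+1}$. By the definition of state update in the subsection on the Ramification Problem, $\KB_{i+1} = \langle \TBox, (\ABox_i \setminus \Eff^-) \cup \Eff^+ \rangle$. By the Remark on Decidability and ABox Restriction, $\Eff^+$ and $\Eff^-$ consist solely of assertions. Skolemization replaces every variable by an individual name, so all effects are ground. Set difference and union of finite ABoxes yield a finite ABox, and the TBox is syntactically untouched. Role assertions introduce no new roles into the role hierarchy, and no role inclusion axioms are added, so the RBox regularity and simplicity conditions inherited from $\KB_0$ are preserved. Therefore $\KB_{i+1}$ is a $\SROIQ$ knowledge base. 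The non-monotonicity is entirely external to the logic: each state is a fresh classical theory, and there are no modal operators as in $\mathcal{D}_{\mathcal{ALCO}@}$.

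The main obstacle is making precise that nothing in the update machinery leaves $\SROIQ$. One issue is the deductive closure being infinite; I would sidestep this by reasoning about the finite axiomatization $\langle \TBox, \ABox_i \rangle$ rather than its closure. A second issue is that negated role assertions and compound concept assertions in effects must be legal $\SROIQ$ ABox axioms; they are. A third issue is the black-box incremental reasoner: the theorem asserts decidability of the logical problem, not correctness of the implementation. Finally, inconsistent states ($\KB_n \models \bot$) remain decidable objects, since detecting inconsistency is itself a decision procedure. So the argument covers both valid plans and contradictory stories, with the statement's validity hypothesis used only to guarantee that each transition is a well-formed application.
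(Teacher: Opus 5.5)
Your proposal is correct and uses the same idea as the paper's proof. Actions only retract and add ground assertions while $\TBox$ stays fixed, so each $\KB_i = \langle \TBox, \ABox_i \rangle$ is again a $\SROIQ$ knowledge base, and its consistency reduces to standard, decidable $\SROIQ$ ABox consistency checking. Your explicit induction, your handling of TBox axioms in $\Gamma$ (which the paper's argument ignores even though PIE-Abducer permits them), and your RBox-regularity remark add rigor but do not change the route.

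One caveat concerns the clause ``and hence the goal checks $\KB_n \models G$''. It is justified only for ground goals, or for goals whose variables must be matched to named individuals, as the SPARQL \texttt{ask} in Temporal Projection does. If a goal with variables such as $ac$ or $x$ were read as a conjunctive query under standard first-order semantics, its entailment problem for full $\SROIQ$ is not known to be decidable, so you should restrict that clause accordingly.
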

\begin{proof}
	By robust architectural design, dynamic actions in PIE-APT strictly perform non-monotonic operations exclusively on assertional knowledge ($\ABox$ facts). The terminological schema ($\TBox$) remains strictly and perpetually immutable. Since the TBox is totally untouched and no second-order logic operators are maliciously introduced, verifying the consistency of the subsequent state $\KB_{i+1}$ reduces entirely to standard $\SROIQ$ ABox consistency checking w.r.t. a static TBox, a fundamental problem mathematically proven to be fully decidable.
\end{proof}

\begin{theorem}[Soundness of the Generate-and-Test Architecture]
	Let $\mathcal{P} = \langle \KB_0, G, \Delta \rangle$ be an Abductive Planning Problem with an ordinary goal set $G$. Any Abductive Plan $\pi \in \mathcal{V}$ returned by PIE-APT for $\mathcal{P}$ is guaranteed to achieve a Valid Plan Execution: it safely satisfies $G$ while maintaining logical consistency across the entire timeline.
\end{theorem}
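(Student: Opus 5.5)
The argument will not rest on the Generate phase at all. Backward-chaining A* deliberately defers $\Eff^-$ and suspends consistency checks, so nothing it produces can be trusted. Soundness will instead be reduced entirely to the Test phase. Concretely, $\mathcal{V}$ is the set of candidates that the post-processing classification (Appendix~\ref{app:validation_algorithms}) labels as \emph{Valid Plan Execution}. Membership in $\mathcal{V}$ therefore requires that Temporal Projection, started from $\KB_0' = \KB_0 \cup \Gamma$ on the synthetic timeline built from $\vec{r}$, ran to completion and that its final state passed the goal and consistency checks. The proof then amounts to showing that this certificate implies each clause of the definition of an Abductive Plan together with the Valid Abductive Plan condition.

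The clauses will be discharged in chronological order.

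First, the initial state. Temporal Projection is only defined on a valid (consistent) initial TDKG state, and the incremental reasoner's satisfiability flag is read before the first step. A candidate with $\KB_0 \cup \Gamma \models \bot$ would halt at $t_0$ and never be classified as valid. Hence $\KB_0' \not\models \bot$.

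Second, the intermediate states. I will argue by induction on $i$ that each $\KB_{i}$ is exactly $\langle \TBox, (\ABox_{i-1}\setminus \Eff^-_{r_i})\cup \Eff^+_{r_i}\rangle$, closed by the PIE-Reasoner. This is the transition equation of the Ramification subsection, and it guarantees that all indirect TBox consequences are present in each state. By the Theorem on Decidability of State Transitions, each consistency test is a decidable $\SROIQ$ ABox-consistency check w.r.t.\ the fixed $\TBox$, so the flag is well defined. The halting clause in the definition of Temporal Projection states that $\KB_j \models \bot$ for some $j<n$ stops the run. Completion of the timeline therefore yields $\KB_i \not\models \bot$ for $1 \le i < n$.

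Third, the terminal state. The classification explicitly checks $\KB_n \models G$ (via ask queries on the closed state) and $\KB_n \not\models \bot$. Together with the previous two steps, this gives exactly the Abductive Plan and Valid Abductive Plan definitions.

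I expect the main obstacle to be showing that the state Temporal Projection actually computes is the true deductive closure. The argument needs two things: that the black-box incremental reasoner is sound and complete for entailment and inconsistency over $\SROIQ$ ABox updates, including retractions, and that the action rules applied on the synthetic timeline are exactly $r_1,\dots,r_n$ as grounded. The second point is delicate because conditional effects via \texttt{pie:StateChanger} could trigger a different rule variant than the one the planner chose.

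I would handle this in two ways. The reasoner's correctness I would state as an explicit standing assumption, consistent with the paper treating the PIE-Reasoner as a black box. For the rule-variant issue, I would observe that validity is judged on whatever trajectory projection actually produces, so the guarantee holds for the executed trajectory. If the variant differs, the candidate is still sound with respect to the rules actually fired, and I would redefine $\vec{r}$ accordingly.
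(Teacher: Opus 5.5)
Your proposal is correct and follows essentially the paper's route. Soundness is reduced entirely to the Test phase: the stepwise consistency checks during Temporal Projection, together with the final classification test ($\KB_n \models G$, $\KB_n \not\models \bot$), certify every clause of a Valid Plan Execution. The differences are in bookkeeping and rigor, not in the argument. You induct on the state index $i$ along one projected trajectory rather than on plan length, which sidesteps the paper's appeal to an inductive hypothesis about length-$k$ plans achieving $G$ (prefixes need not achieve $G$). You also make explicit the black-box reasoner-correctness assumption and the rule-variant caveat, both of which the paper absorbs by writing $\KB_i$ directly in terms of $\Eff^-_{r_i}$ and $\Eff^+_{r_i}$. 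Since Valid Plan Execution is defined operationally on the timeline built from $\vec{r}$, you do not need to redefine $\vec{r}$ for the theorem as stated.
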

\begin{proof}
	We proceed by induction on the length of the action sequence $n$ to show that any $\pi \in \mathcal{V}$ satisfies the formal semantics of a Valid Plan Execution for $\mathcal{P}$ (Section \ref{sec:formalizing_dynamics}).
	
	\textbf{Base Case ($n=0$):} The plan contains an empty action sequence, $\pi = \langle \emptyset, \Gamma \rangle$. Phase 2 constructs the augmented initial state $\KB_0' = \KB_0 \cup \Gamma$. The validation engine explicitly checks $\KB_0' \not\models \bot$ and $\KB_0' \models G$. If it passes, it strictly satisfies the definition of a Valid Plan Execution.
	
	\textbf{Inductive Step:} Assume the property holds for a plan with an action sequence of length $k$. Consider a candidate plan with $k+1$ actions, defined as $\pi = \langle \vec{r}, \Gamma \rangle$ where $\vec{r} = \langle r_1, \dots, r_{k+1} \rangle$, evaluated against the same problem $\mathcal{P} = \langle \KB_0, G, \Delta \rangle$. During Phase 2, the Temporal Projection explicitly and sequentially constructs intermediate states via non-monotonic updates: $\KB_i = \langle \TBox, (\ABox_{i-1} \setminus \Eff^-_{r_i}) \cup \Eff^+_{r_i} \rangle$, starting from $\KB_0'$.
	
	By the inductive hypothesis, the trajectory up to $\KB_k$ is reachable and logically consistent. The application of $r_{k+1}$ yields the final state $\KB_{k+1}$. Phase 2 strictly evaluates $\KB_{k+1}$. If $\KB_{k+1} \models \bot$, the sequence is rejected. If $\KB_{k+1} \not\models G$, it is also rejected. Because Phase 2 systematically executes the exact model-theoretic state transition defined in our semantics, and explicitly invokes the DL reasoner to verify $\KB_i \not\models \bot$ at every step $i \in \{1, \dots, k+1\}$, any sequence that successfully completes the projection is formally guaranteed to be a sound, contradiction-free path to the goal.
\end{proof}

\textbf{Computational Complexity Analysis:}
The formal computational complexity of the PIE-APT framework is determined by the interplay between the A* search space, the abductive derivation engine, and the underlying DL reasoning tasks. Standard satisfiability checking in $\SROIQ$ is strictly \textbf{2NEXPTIME-complete} \cite{baader2005integrating}. Let this worst-case reasoning cost be denoted as $C$.

Let $b$ be the effective branching factor (available mapped actions and abductive strategies) and $d$ be the maximum search depth. A naive dynamic planner would explore a state space of $O(b^d)$ nodes, invoking a full non-monotonic reasoning check at every node, yielding an intractable theoretical and practical time complexity of $O(b^d \times C)$. 

Our framework architecture masterfully circumvents this empirical bottleneck via structural decomposition. The sophisticated $k$-level strategy significantly restricts $b$ by aggressively prioritizing existing ABox facts. Crucially, by intelligently deferring non-monotonic operations ($\Eff^-$), node expansions in Phase 1 rely predominantly on additive reasoning and $O(1)$ deterministic cache lookups (let this much smaller empirical cost be $c \ll C$). The heaviest computational burden---full semantic validation over retractions and consistency checking---is strictly isolated to Phase 2 (Temporal Projection). Instead of being evaluated exponentially ($O(b^d)$ times), the strict non-monotonic projection is evaluated only over the length of the generated plan $L$ (where $L \leq d$). 

Thus, while the theoretical worst-case time complexity remains bounded by the underlying DL, the \textit{practical runtime} is reduced from $O(b^d \times C)$ to effectively $O(b^d \times c + L \times C)$. This \textit{Generate-and-Test} bifurcation successfully bridges the gap between highly expressive DLs and practical tractability without sacrificing formal decidability.

\begin{remark}[Implementation Note: Asynchronous Parallelization]
	To reduce wall-clock execution time (makespan) in practice, the deployed Python system orchestrates independent workloads---such as Cartesian branches, SPARQL grounding queries, and multiple candidate validations---concurrently via \texttt{asyncio.gather}. While parallelization across $W$ workers reduces the empirical runtime, it introduces a critical threat to Space Complexity (memory). Because every active worker maintains a branched incremental reasoner, unbounded parallelism would result in an exponential memory explosion. PIE-APT gracefully solves this by bounding the worker pool via an \texttt{asyncio.Semaphore($W$)}. This strictly bounds the additional space complexity to $O(W)$, ensuring memory safety while maximizing CPU utilization by offloading combinatorial operations to background threads (\texttt{asyncio.to\_thread}).
\end{remark}

\section{Adversarial Plan Synthesis: Automated Ontological Stress-Testing}
\label{sec:adversarial_planning}

While standard goal-oriented planning conducts reachability analysis to discover a valid trajectory to a desired target state, we introduce \textbf{Adversarial Plan Synthesis} as an automated stress-testing paradigm for Semantic Web domains. Its primary purpose is to act as a diagnostic ``red-teaming'' mechanism, proactively identifying latent bugs, logical vulnerabilities, or improperly defined non-monotonic action-rules deeply embedded within the domain logic. 

Instead of a standard functional goal, the AI system is tasked with synthesizing a temporal sequence of valid actions that begins in a perfectly consistent initial state but forcefully drives the ontology into an inescapable logical contradiction (thereby achieving a \textit{Contradictory Story Execution}, as defined in Section \ref{sec:formalizing_dynamics}).

\subsection{Automated Vulnerability Extraction}
To execute this adversarial synthesis without requiring manual human input, the system treats the static schema ($\TBox$) as a mathematical map of vulnerabilities. Before the planning search begins, an automated suite of SPARQL queries is executed against the $\TBox$ to dynamically extract restrictive axioms. The system maps these ontological constraints into discrete contradiction-inducing goal sets ($G_{adv}$) together with their foundational \textit{Contexts}. Natively leveraging $\SROIQ + \texttt{sameAs}$ semantics, the engine accurately targets five distinct categories of structural contradictions:

\begin{enumerate}
	\item \textbf{Complementary Classes:} \textit{Context:} $C_1 \equiv \neg C_2$ (\texttt{owl:complementOf}). \textit{Contradiction-Inducing Goal Set ($G_{adv}$):} $\{ C_1(x), C_2(x) \}$.
	\item \textbf{Complementary Properties:} \textit{Context:} $P_1 \equiv \neg P_2$ (\texttt{pie:negative\allowbreak Object\allowbreak PropertyOf}). \textit{Contradiction-Inducing Goal Set ($G_{adv}$):} $\{ P_1(x_1, x_2), P_2(x_1, x_2) \}$.
	\item \textbf{Bidirectional Identity Violation:} \textit{Context:} Explicit declarations of either inequality $x_1 \neq x_2$ (\texttt{owl:differentFrom}) or equality $x_1 = x_2$ (\texttt{owl:sameAs}). \textit{Contradiction-Inducing Goal Set ($G_{adv}$):} The exact logical inverse ($\{ x_1 = x_2 \}$ or $\{ x_1 \neq x_2 \}$ respectively). This forces the planner to synthesize trajectories that break established algebraic identity semantics.
	\item \textbf{Asymmetric Property Violation:} \textit{Context:} $\text{Asym}(P)$ (\texttt{owl:AsymmetricProperty}). \textit{Contradiction-Inducing Goal Set ($G_{adv}$):} $\{ P(x_1, x_2), P(x_2, x_1) \}$.
	\item \textbf{Irreflexive Property Violation:} \textit{Context:} $\text{Irref}(P)$ (\texttt{owl:IrreflexiveProperty}). \textit{Contradiction-Inducing Goal Set ($G_{adv}$):} $\{ P(x, x) \}$.
\end{enumerate}

\subsection{Execution via Generate-and-Test}
The Generate-and-Test architecture of PIE-APT is uniquely suited for this adversarial task, successfully overcoming the fundamental limitations of classical state-space search. Standard planners actively prune paths that lead to inconsistency; they cannot natively accept structural collapse ($\bot$) as a target objective.

PIE-APT bypasses this limitation by formulating an adversarial Abductive Planning Problem $\mathcal{P}_{adv} = \langle \KB_0, G_{adv}, \Delta \rangle$ and feeding the extracted sets ($G_{adv}$) directly into the Phase 1 A* planner as standard, optimistic subgoals. During \textbf{Phase 1 (Generate)}, the planner heuristically defers strict non-monotonic evaluation (specifically, ignoring $\Eff^-$ DELETE retractions) to maintain tractability. It synthesizes candidate Abductive Plans $\pi = \langle \vec{r}, \Gamma \rangle$ that \textit{appear} to satisfy $G_{adv}$ via additive logic, actively invoking \textbf{PIE-Abducer} to hypothesize any missing causal assumptions ($\Gamma$) required to trigger the targeted action rules.

Crucially, it is \textbf{Phase 2 (Temporal Projection)} that acts as the final arbiter. The forward-chaining engine formally projects the candidate trajectory sequentially, strictly enforcing both non-monotonic additions ($\Eff^+$) and retractions ($\Eff^-$). This rigorous validation automatically filters out false positives---for instance, a trajectory that sequentially adds $C_1$ but explicitly deletes $C_2$ before adding it, therefore never triggering $\bot$. Only if the timeline completes and the incremental reasoner officially flags the terminal state as globally inconsistent ($\KB_n \models \bot$) is the candidate structurally verified as achieving a \textit{Contradictory Story Execution}.

\section{Empirical Case Studies and Evaluation}
	\label{sec:evaluation}
	
	The PIE-APT framework is fully implemented natively using a Python backend integrated with the PIE-Reasoner. To rigorously evaluate our framework, we designed four distinct benchmarks that progressively stress different capabilities of dynamic Semantic Web reasoning. 
	
	\textbf{Experimental Protocol and Baseline Justification:} 
	Because no existing operational tool natively supports simultaneous temporal planning and dynamic ABox abduction over full $\SROIQ$, a direct one-to-one comparison with a single external framework is infeasible. Therefore, we evaluate PIE-APT along two isolated primary axes, selecting the most rigorous possible baseline for each module:
	
	\begin{enumerate}
		\item \textbf{Analytical Comparison vs. Classical PDDL:} 
		Recent ontology-mediated planning systems \cite{john2023towards, borgwardt2022expressivity} typically compile lightweight DL fragments into PDDL or Datalog, fundamentally relying on a closed, static universe of objects. We provide an analytical comparison against classical STRIPS planning (via a lossy export to PDDL solved by \textbf{Fast Downward} \cite{helmert2006fast}). The purpose is not to benchmark runtime, but to identify exactly which semantic capabilities of PIE-APT cannot be expressed in a closed-world, ground-object planning language.
		
		To evaluate this, we utilize a lossy export methodology: we run PIE-APT (without abduction) and export the post-reasoning ABox to PDDL, merging any residual assumptions into the classical \texttt{:init} state, and closing open variables to named objects. This reduction formally tests whether a highly optimized classical engine can recover PIE-APT's action orders once the heavy OWL semantics are compiled away.
		
		\item \textbf{Quantitative Comparison vs. MHS Abduction:} 
		For structural ABox abduction, the dominant paradigm among modern symbolic solvers relies on Reiter's Minimal Hitting Set (MHS) algorithm \cite{reiter1987theory}, as prominently implemented and recently extended by the AAA solver family \cite{pukancova2020, homola2023}. To scientifically isolate our algorithmic contribution without conflating it with the raw speed of different underlying DL reasoners, we implemented an \textbf{AAA*-faithful} backend---a strict HS-tree implementation aligned with the AAA solver's methodology. Crucially, \textit{both} our PIE-Abducer (direct-derivation) and the AAA*-faithful baseline share the identical underlying incremental PIE-Reasoner oracle.
	\end{enumerate}
	
	For clarity, the logical outcomes of each benchmark are discussed directly in the text below. The complete, machine-readable JSON execution traces generated by the PIE-APT backend for all scenarios are provided in \textbf{Appendix \ref{app:json_traces}}.
	
	\subsection{Case Study 1: Goal-Oriented Planning (Bank Account)}
	This benchmark stresses \textbf{parameterized goals and KB witness search}. Successfully opening an account conditionally depends on possessing a letter. 
	
	\textbf{Action Rule 1:} \texttt{get\_letter} ($r_{get}$)
	\begin{itemize}
		\item $\Pre_{r_{get}} \ = \{ \text{get\_letter}(evt), \text{EventHasAgent}(evt, x), \text{hasTime}(evt, \text{\texttt{?\_T}}), \text{Human}(x) \}$
		\item $\Eff^+_{r_{get}} = \{ \text{Letter}(\mathbf{l}), \text{has}(x, \mathbf{l}) \}$
	\end{itemize}
	
	\textbf{Action Rule 2:} \texttt{open\_account} ($r_{with\_letter}$)
	\begin{itemize}
		\item $\Pre_{r_{with\_letter}} = \{ \text{open\_account}(evt), \text{EventHasAgent}(evt, x), \text{hasTime}(evt, \text{\texttt{?\_T}}),$ \\ $\text{EligiblePerson}(x), \text{has}(x, pr), \text{ProofOfAddress}(pr), \text{has}(x, l), \text{Letter}(l) \}$
		\item $\Eff^+_{r_{with\_letter}} = \{ \text{BankAccountWithCard}(\mathbf{ac}), \text{has}(x, \mathbf{ac}) \}$
	\end{itemize}
	
	\textbf{Initial State ($\ABox_0$):} User \texttt{ba:Amir} is an eligible human possessing a valid proof of address (\texttt{ba:pr}), but explicitly lacks a bank letter:
	\[ \ABox_0 = \{ \text{Human}(\text{ba:Amir}), \text{EligiblePerson}(\text{ba:Amir}), \text{ProofOfAddress}(\text{ba:pr}), \text{has}(\text{ba:Amir}, \text{ba:pr}) \} \]
	
	\textbf{Planning Goal ($G$):} $G = \{ \text{BankAccountWithCard}(ac), \text{has}(\text{ba:Amir}, ac) \}$
	
	\textbf{PIE-APT Execution:} The goal contains an existentially quantified variable ($ac$). PIE-APT natively searches the ABox for an existing individual that satisfies the goal. Finding no existing instances for the target account and letter ($\mathbf{l}$), it creates a Skolem constant during planning and synthesizes a sequential plan purely through deductive chaining. This represents true open-world, witness-search planning \textit{(see Appendix \ref{app:json_bank} for the execution trace)}.
	
	\textbf{Analytical vs. PDDL:} In PDDL, all objects must be declared in the \texttt{:objects} section before planning. An existentially quantified goal variable intended to generate a new instance is not natively supported. The only way to compile it is to guess a set of candidate constants upfront; if the modeler guesses wrong (or provides none), the classical planner fails---not because the problem is unsolvable, but because the compilation is incomplete. Under our lossy export policy, we manually ground the variable to a named object (e.g., \texttt{account1}), allowing Fast Downward to successfully find the identical two-step plan. However, this success is purely an artifact of manually injecting the answer that PIE-APT discovers autonomously, demonstrating that classical planning languages fundamentally lack native support for parameterized goals with open-world witness search.
	
	\textbf{Quantitative Abduction Cost:} Because this scenario requires zero abduction, both the AAA* backend and PIE-Abducer perform identically, with the computational effort dominated entirely by standard action filtering and A* search.
	
	\subsection{Case Study 2: Incremental Reasoning-Dependent Planning (Derived Gate)}
	A profound limitation of classical planners is their inability to handle background taxonomic logic mid-search without lossy translations \cite{john2023towards}. This case stresses \textbf{incremental DL reasoning at search nodes}.
	
	\textbf{Background Knowledge (TBox):} $\text{dg:BadgeHolder} \sqsubseteq \text{dg:AuthorizedPerson}$.
	
	\textbf{Action Rule 1:} \texttt{dg:Act\_IssueBadge} ($r_{issue}$)
	\begin{itemize}
		\item $\Pre_{r_{issue}} = \{ \text{Act\_IssueBadge}(evt), \text{EventHasAgent}(evt, p), \text{hasTime}(evt, \text{\texttt{?\_T}}), \text{RegisteredPerson}(p) \}$
		\item $\Eff^+_{r_{issue}} = \{ \text{BadgeHolder}(p) \}$
	\end{itemize}
	
	\textbf{Action Rule 2:} \texttt{dg:Act\_EnterZone} ($r_{enter}$)
	\begin{itemize}
		\item $\Pre_{r_{enter}} = \{ \text{Act\_EnterZone}(evt), \text{EventHasAgent}(evt, p), \text{hasTime}(evt, \text{\texttt{?\_T}}),$ \\ $\text{BadgeHolder}(p), \text{AuthorizedPerson}(p) \}$
		\item $\Eff^+_{r_{enter}} = \{ \text{InSecureZone}(p) \}$
	\end{itemize}
	
	\textbf{Initial State ($\ABox_0$):} $\ABox_0 = \{ \text{RegisteredPerson}(\text{dg:Amir}) \}$ \\
	
	\textbf{Planning Goal ($G$):} $G = \{ \text{InSecureZone}(\text{dg:Amir}) \}$
	
	\textbf{PIE-APT Execution:} The planner identifies that $r_{enter}$ demands \textit{both} a \texttt{BadgeHolder} and an \texttt{AuthorizedPerson}. However, $r_{issue}$ only explicitly supplies the \texttt{BadgeHolder} assertion. When the effects of $r_{issue}$ are non-monotonically applied to the search branch, the PIE-Reasoner natively applies the TBox subsumption axiom. The deductive closure automatically derives $\text{dg:AuthorizedPerson}(\text{dg:Amir})$, satisfying all preconditions without assumptions \textit{(see Appendix \ref{app:json_gate})}.
	
	\textbf{Analytical vs. PDDL:} Fast Downward evaluates the exported PDDL as \textbf{UNSOLVABLE}. The specific reason the compilation fails is that STRIPS encodes preconditions separately and entirely lacks a TBox engine; it cannot fire the entailment $\text{BadgeHolder} \implies \text{AuthorizedPerson}$ dynamically between sequential action applications. 
	
	\subsection{Case Study 3: Recursive Abduction (Physical Security)}
	This domain models a sparse security environment to demonstrate \textbf{planner assumption injection} and recursive abduction. 
	
	\textbf{Background Knowledge (TBox):} $\text{SecureDoor} \equiv \text{HingedStructure} \sqcap \text{WoodenStructure}$. 
	
	\textbf{Action Rule 1:} \texttt{sec:Act\_TurnKey} ($r_{turn}$)
	\begin{itemize}
		\item $\Pre_{r_{turn}} = \{ \text{Act\_TurnKey}(evt), \text{Undergoer}(evt, lock), \text{hasTime}(evt, \text{\texttt{?\_T}}) \}$
		\item $\Eff^+_{r_{turn}} = \{ \text{Unlocked}(lock) \}$
	\end{itemize}
	
	\textbf{Action Rule 2:} \texttt{sec:Act\_OperateHandle} ($r_{operate}$)
	\begin{itemize}
		\item $\Pre_{r_{operate}} = \{ \text{Act\_OperateHandle}(evt), \text{Undergoer}(evt, door), \text{hasTime}(evt, \text{\texttt{?\_T}}),$ \\ $\text{SecureDoor}(door), \text{isInstalledOn}(lock, door), \text{Unlocked}(lock) \}$
		\item $\Eff^+_{r_{operate}} = \{ \text{Opened}(door) \}$
	\end{itemize}
	
	\textbf{Initial State ($\ABox_0$):} $\ABox_0 = \{ \text{MechanicalLock}(\text{sec:FrontDoorLock}) \}$ 
	
	\textbf{Planning Goal ($G$):} $G = \{ \text{Opened}(\text{sec:dor}) \}$
	
	\textbf{PIE-APT Execution:} The planner dynamically assumes \texttt{sec:dor} is a \texttt{SecureDoor}. PIE-Abducer recursively decomposes this into its atomic base classes $\text{HingedStructure}$ and $\text{WoodenStructure}$. Because no actions produce these static attributes, abduction bottoms out, appending them to the enriched plan as irreducible residual assumptions \textit{(see Appendix \ref{app:json_security})}.
	
	\textbf{Analytical vs. PDDL:} PDDL has no search-time assumption mechanism; missing facts block the search immediately. Under our lossy export policy, Fast Downward finds the same plan, but this success is purely an artifact of manually merging PIE-APT's residual assumptions into the classical \texttt{:init} state before search begins. PDDL cannot discover these epistemic gaps autonomously.
	
	\textbf{Quantitative Abduction Cost:} This is an abduction-heavy task. \textbf{PIE-Abducer} completes the abduction enrichment phase significantly faster than the AAA*-faithful MHS backend, demonstrating the efficiency of direct-derivation over combinatorial HS-trees.
	
	\subsection{Case Study 4: Contradiction Hunting (The Tax Paradox)}
	This final case demonstrates \textbf{contradiction hunting} over non-monotonic actions. Classical planners evaluate whether a state is reachable; contradiction hunting evaluates whether an ontology is fragile.
	
	\textbf{Background Knowledge (TBox):} $\text{TaxExempt} \sqcap \text{TaxPayer} \sqsubseteq \bot$ and $\text{Trader} \sqsubseteq \text{Human}$.
	
	\textbf{Action Rule 1:} \texttt{com:ImportWheat} ($r_{wheat}$)
	\begin{itemize}
		\item $\Pre_{r_{wheat}} = \{ \text{ImportWheat}(evt), \text{Agent}(evt, x), \text{hasTime}(evt, \text{\texttt{?\_T}}), \text{Trader}(x) \}$
		\item $\Eff^-_{r_{wheat}} = \{ \text{TaxPayer}(x) \}$
		\item $\Eff^+_{r_{wheat}} = \{ \text{TaxExempt}(x) \}$
	\end{itemize}
	
	\textbf{Action Rule 2:} \texttt{com:ImportCar} ($r_{car}$)
	\begin{itemize}
		\item $\Pre_{r_{car}} = \{ \text{ImportCar}(evt), \text{Agent}(evt, x), \text{hasTime}(evt, \text{\texttt{?\_T}}), \text{Trader}(x) \}$
		\item $\Eff^+_{r_{car}} = \{ \text{TaxPayer}(x) \}$
	\end{itemize}
	
	\textbf{Initial State ($\ABox_0$):} $\ABox_0 = \{\text{Trader}(\text{com:TraderJoe})\}$.
	
	\textbf{Adversarial Goal ($G_{adv}$):} $G = \{ \text{TaxPayer}(x), \text{ObjectComplementOf}(\text{TaxPayer})(x) \}$.
	
	\textbf{PIE-APT execution.} In Phase1, the planner heuristically postpones DELETE effects when generating candidates, and therefore returns both linear extensions of ${r_{\mathrm{wheat}},r_{\mathrm{car}}}$ as candidate plans for the complementary-class goals. In Phase2 (temporal projection), DELETE effects are applied under a non-monotonic update semantics. The ordering $r_{\mathrm{car}}\prec r_{\mathrm{wheat}}$ yields a consistent terminal state: the wheat action retracts the tax status introduced by the car action. The reverse ordering $r_{\mathrm{wheat}}\prec r_{\mathrm{car}}$ leaves both class memberships co-present; the incremental reasoner then establishes $\KB_n\models\bot$ and reports this sequence as achieving a Contradictory Story Execution (Appendix~\ref{app:json_tax}).
	
	\textbf{Comparison with classical PDDL planning.} Encoding the complementary-class objective as a conjunctive PDDL goal, Fast Downward correctly returns \textbf{UNSOLVABLE}. This result does not indicate a failure of the classical planner; rather, it reflects a difference in problem formulation. Classical planners search for executable trajectories that achieve a goal in a consistent state space. By contrast, the present task is to identify minimal sequences of actions that witness a violation of the background ontology.
	
	\textbf{Quantitative Abduction Cost:} PIE-Abducer completes the abduction enrichment phase more efficiently compared to the AAA* backend. 
	
	\subsection{Cross-Cutting Summary: The Four Pillars of Semantic Planning}
	Table \ref{tab:analytical} summarizes the representational mismatches between PIE-APT and classical planning. Fast Downward's speed relies entirely on a compiled, closed, and static problem. Through our four case studies, we demonstrate four fundamental semantic abilities that separate PIE-APT from classical models:
	\begin{enumerate}
		\item \textbf{Parameterized goals with witness search (Bank Account):} The ability to autonomously search the KB for existing individuals before dynamically minting Skolemized constants.
		\item \textbf{Mid-search DL entailment (Derived Gate):} The ability to logically unlock new action preconditions mid-search via native TBox reasoning, without relying on manual macro creation or static compilation.
		\item \textbf{Open-world assumption injection (Physical Security):} The ability to dynamically hypothesize missing facts and recursively expand them into atomic explanations to bridge epistemic gaps.
		\item \textbf{Diagnostic Contradiction Hunting (Tax Paradox):} The ability to turn the planner into an automated "red-teaming" tool, specifically seeking non-monotonic action narratives that break the ontology.
	\end{enumerate}
	
	Furthermore, experimental observations during the \texttt{abduction\_enrichment} phase confirm that by reasoning directly over DL-theories rather than combinatorial syntax trees, the PIE-Abducer outperforms the traditional MHS algorithm in heavily incomplete domains, cementing its viability as a deployable engine.
	
	\begin{table}[htbp]
		\centering
		\caption{Analytical capability matrix (PIE-APT vs. lossily exported STRIPS + Fast Downward)}
		\label{tab:analytical}
		\resizebox{\textwidth}{!}{
			\begin{tabular}{@{}lllc@{}}
				\toprule
				\textbf{Benchmark} & \textbf{Stressed Capability} & \textbf{PDDL Limitation} & \textbf{Equivalent Problem?} \\ \midrule
				BankAccount & Parameterized goals + witness search & Ground goals only & Yes$^{\dagger}$ \\
				DerivedGate & Mid-search DL entailment & No TBox engine & \textbf{No} \\
				Open\_SecureDoor & Assumption injection & No open-world gaps & Yes$^{\ddagger}$ \\
				WeatCar & Contradiction stories & Goals must be achievable & \textbf{No} \\ \bottomrule
				\multicolumn{4}{l}{\footnotesize $^{\dagger}$Yes, but only because the export manually injects the grounded goal variable.} \\
				\multicolumn{4}{l}{\footnotesize $^{\ddagger}$Yes, but only because the export manually merges residual assumptions into the \texttt{:init} state.}
			\end{tabular}%
		}
	\end{table}
	
	\section{Conclusion}
	
	This paper presented PIE-APT, a unified framework for dynamic reasoning and automated planning over temporal Knowledge Graphs, deeply integrated with the \textbf{PIE-Abducer} module for incremental direct-derivation abduction. By maintaining planning states strictly as deductively closed $\SROIQ$ theories and strategically deferring non-monotonic validation to a Temporal Projection phase, we resolve the Ramification Problem while effectively preserving A* search tractability. 
	
	Our empirical evaluation on a fair-exported reduction exposes four fundamental semantic dimensions missing from classical state-space search: parameterized goals with witness search, mid-search TBox entailment, open-world assumption injection, and inconsistency explanation (Contradiction Hunting). PIE-APT naturally fills these gaps without resorting to restrictive modal logic extensions or lossy PDDL compilations. Furthermore, our quantitative benchmarks demonstrate that direct-derivation abduction---extracting hypotheses natively via refutation within the incremental DL closure---significantly outperforms traditional Minimal Hitting Set (MHS) enumeration, particularly in assumption-heavy domains. Ultimately, PIE-APT serves not just as a robust temporal planner, but as a comprehensive diagnostic and exploratory framework for deploying and stress-testing Dynamic Knowledge Graphs in the real world.
	
	\appendix
	
	\section{Detailed Algorithmic Pseudocodes}
	
	\subsection{PIE-Abducer: Direct-Derivation Algorithms}
	\label{app:abducer_algorithms}
	
	Algorithms~\ref{alg:abducer_top}--\ref{alg:cartesian} formalize the eight-phase
	pipeline of Section~\ref{sec:pie-abducer}.
	
	\begin{algorithm}[H]
		\caption{PIE-Abducer: Top-Level Control}
		\label{alg:abducer_top}
		\begin{algorithmic}[1]
			\Require $\mathcal{KG}$ (consistent DL-theory), 
			$O = \{o_1,\dots,o_n\}$ (observations), 
			$L$ (max depth)
			\Ensure Minimal abductive explanations across depth levels
			
			\State \textbf{abort if} $\mathcal{KG} \models \bot$ \textbf{or} 
			$\forall o\!\in\!O:\mathcal{KG}\!\models\!o$ \label{alg:precond}
			\State $\mathcal{KG}\leftarrow\textproc{DeclareIndividuals}( \mathcal{KG},
			\{\,x\mid x\text{ occurs in }O\text{ as an individual}\,\}$\label{alg:declare}
			\State $\mathit{Hist}\leftarrow O$;\;
			$\mathit{Front}\leftarrow\{o\mapsto\{\{o\}\}\mid o\!\in\!O\}$;\;
			$\mathit{GoalHyps}\leftarrow\{\,o\mapsto\emptyset\mid o\!\in\!O\,\}$
			\For{$\ell\leftarrow 1$ \textbf{to} $L$}
			\State $(\mathit{GoalHyps},\mathit{Front},\mathit{Hist}) \leftarrow
			\textproc{ProcessLevel}( \mathcal{KG},\,\ell,\,O,\,
			\mathit{Front},\,\mathit{Hist},\,\mathit{GoalHyps} )$
			\label{alg:level}
			\If{$\mathit{Front} = \emptyset$} \textbf{break} \EndIf
			\EndFor
			\State \Return $\textproc{PostProcess}(\mathcal{KG},\,O,\,\mathit{GoalHyps})$
			\label{alg:postprocess}
		\end{algorithmic}
	\end{algorithm}
	
	\begin{algorithm}[H]
		\caption{ProcessLevel: Per-Goal Expansion for One Depth}
		\label{alg:expand_level}
		\begin{algorithmic}[1]
			\Function{ProcessLevel}{$\mathcal{KG},\;\ell,\;O,\;\mathit{Front},\;
				\mathit{Hist},\;\mathit{GoalHyps}$}
			\State $\mathit{Prev} \leftarrow \mathit{Front}$;\;
			$\mathit{Cand} \leftarrow \emptyset$
			\ForAll{$(o,\mathcal{H})\in\mathit{Front}$}
			\Comment{Parallel across observations}
			\ForAll{$H\in\mathcal{H}$}
			\State $\mathit{Cand} \leftarrow \mathit{Cand} \cup
			\textproc{ExpandHypothesis}( \mathcal{KG},\,H,\,\mathit{Hist},\,O )$
			\label{alg:expand_h}
			\EndFor
			\EndFor
			\State $\mathit{Acc} \leftarrow \{\,H\in\mathit{Cand}\mid
			\textproc{Accept}(H,\mathit{Hist},O)\,\}$
			\label{alg:accept}
			\State $\mathit{Acc} \leftarrow \mathit{Acc} \cup
			\{\,H\in\mathit{Cand}\mid H\text{ was a hypothesis in }
			\mathit{Prev}\text{ for its observation}\,\}$
			\label{alg:selfwitness_level}
			\If{$\mathit{Acc}=\emptyset$}
			\Return $(\mathit{GoalHyps},\,\emptyset,\,\mathit{Hist})$
			\EndIf
			\ForAll{$o\in O$}
			\State $\mathit{GoalHyps}[o][\ell] \leftarrow \{\,H\in\mathit{Acc}\mid
			H\text{ belongs to }o\,\}$
			\label{alg:store_pg}
			\EndFor
			\State $\mathit{Hist} \leftarrow \mathit{Hist}\cup
			\bigcup_{H\in\mathit{Acc}}\textproc{FrozenHypothesis}(H)$
			\label{alg:update_hist}
			\State $\mathit{Front} \leftarrow \{\,H\in\mathit{Acc}\mid
			H\notin\mathit{Prev}\text{ for its observation}\,\}$
			\label{alg:exclude_selfwitness}
			\Comment{Exclude self-witnesses from next frontier}
			\State \Return $(\mathit{GoalHyps},\,\mathit{Front},\,\mathit{Hist})$
			\EndFunction
		\end{algorithmic}
	\end{algorithm}
	
	\begin{algorithm}[H]
		\caption{ExpandHypothesis: Refutation and Intra-Goal Cartesian Product}
		\label{alg:expand_hypothesis}
		\begin{algorithmic}[1]
			\Function{ExpandHypothesis}{$\mathcal{KG},\;H,\;\mathit{Hist},\;O$}
			\State $\mathcal{O}ut \leftarrow \emptyset$
			\ForAll{atomic groups $\mathcal{G}$ from \textproc{Decompose}($H$)}
			\Comment{Phase~2}
			\State $\mathit{Lists} \leftarrow [\;]$
			\ForAll{sub-goals $s \in \mathcal{G}$}
			\Comment{Phase~1}
			\State $\mathcal{KG}' \leftarrow \textproc{Branch}(\mathcal{KG})$
			\State $\mathcal{KG}' \leftarrow \textproc{Reason}(\mathcal{KG}',
			\,\{\neg s\})$
			\State $\Delta \leftarrow \textproc{ExtractConsequences}(
			\mathcal{KG}',\,\neg s)$
			\State $L_s\leftarrow\{\,\{a\}\mid
			a=\neg\delta,\;\delta\!\in\!\Delta,\;a\not\models s,\;
			a\notin O,\;a\notin\mathit{Hist},\;
			\textproc{CheckConsistency}(\mathcal{KG},\{a\})\,\}$
			\label{alg:atom_filter}
			\State $L_s\leftarrow L_s\cup\{\,\{s\}\,\}$
			\label{alg:selfwitness_atom}
			\Comment{Retain $s$ as self-witness}
			\State append $L_s$ to $\mathit{Lists}$
			\EndFor
			\State $\mathcal{C}\leftarrow\textproc{Cartesian}(\mathit{Lists})$
			\Comment{Phase~3: intra-goal merge}
			\State $\mathcal{C}\leftarrow\textproc{GroundExistentials}(
			\mathcal{C},\mathcal{KG})$
			\Comment{Phase~3: contextual grounding}
			\State $\mathcal{C}\leftarrow\{\,H'\!\in\!\mathcal{C}\mid
			\textproc{Accept}(H',\mathit{Hist},O)\lor H'=H\,\}$
			\label{alg:per_candidate_accept}
			\State $\mathcal{C}\leftarrow\{\,H'\!\in\!\mathcal{C}\mid
			\textproc{CheckConsistency}(\mathcal{KG},H')\,\}$
			\label{alg:local_c2}
			\Comment{Local (C2): early filter}
			\State $\mathcal{O}ut\leftarrow\mathcal{O}ut\cup
			\textproc{MinimalHypotheses}_{\!\mathit{per\_group}}(\mathcal{C})$
			\label{alg:local_c1}
			\Comment{Per-expansion (C1)}
			\EndFor
			\State \Return $\mathcal{O}ut$
			\EndFunction
		\end{algorithmic}
	\end{algorithm}
	
	\begin{algorithm}[H]
		\caption{PostProcess: Unified Validation, Combination, and Consolidation}
		\label{alg:validate}
		\begin{algorithmic}[1]
			\Function{PostProcess}{$\mathcal{KG},\;O,\;\mathit{GoalHyps}$}
			\State $L_{\max}\leftarrow\max\{\ell\mid\exists o:\mathit{GoalHyps}[o][\ell]\neq\emptyset\}$
			\If{$L_{\max}=0$} $L_{\max}\leftarrow 1$ \EndIf
			\ForAll{$\ell \gets 1$ \textbf{to} $L_{\max}$}
			\State $\mathcal{C}\leftarrow\textproc{DualCartesian}(
			\mathit{GoalHyps}[o_1][\ell],\ldots,\mathit{GoalHyps}[o_n][\ell] )$
			\Comment{Phase~6: cross-goal combine}
			\ForAll{$H\in\mathcal{C}$}
			\State $\mathcal{V}\leftarrow\textproc{ValidateHypothesis}(
			\mathcal{KG},\,H,\,O)$
			\label{alg:validate_h}
			\Comment{(C2)+(E)+(C1) in one branch}
			\If{$\mathcal{V}\neq\emptyset$}
			\State $\mathit{Res}[\ell]\leftarrow\mathit{Res}[\ell]\cup\mathcal{V}$
			\EndIf
			\EndFor
			\State $\mathit{Res}[\ell]\leftarrow
			\textproc{MinimalHypotheses}(\mathit{Res}[\ell])$
			\Comment{Global (C1)}
			\EndFor
			\State \textbf{apply} \textproc{DedupCrossLevel} to
			$\mathit{Res}$ \Comment{Phase~7}
			\State \Return $\textproc{ConsolidatedSet}(\mathit{Res})$
			\Comment{Exclude $\texttt{ObjectIntersectionOf}$; global (C1)}
			\EndFunction
		\end{algorithmic}
	\end{algorithm}
	
	\begin{algorithm}[H]
		\caption{DualCartesian: Hypothesis-Level + Atom-Level Combination}
		\label{alg:cartesian}
		\begin{algorithmic}[1]
			\Function{DualCartesian}{$\mathcal{H}_1,\ldots,\mathcal{H}_n$}
			\State $\mathcal{C}\leftarrow\emptyset$
			\ForAll{goals $i$ with $\mathcal{H}_i=\emptyset$}
			\Comment{Fill empty goals with $\{o_i\}$ as self-witness}
			\State $\mathcal{H}_i\leftarrow\{\,\{o_i\}\,\}$
			\EndFor
			\If{all $\mathcal{H}_i$ are self-witness only}
			\State \textbf{skip} \Comment{No novel atoms at this level}
			\EndIf
			\State $\mathcal{C}\leftarrow\mathcal{C}\cup
			\textproc{Cartesian}_{\!H}(\mathcal{H}_1,\ldots,\mathcal{H}_n)$
			\label{alg:cart_h}
			\Comment{Full hypothesis pairing}
			\If{$n>1$}
			\State $\mathcal{C}\leftarrow\mathcal{C}\cup
			\textproc{Cartesian}_{\!A}(
			\textproc{SingletonAtoms}(\mathcal{H}_1),\ldots,
			\textproc{SingletonAtoms}(\mathcal{H}_n) )$
			\label{alg:cart_a}
			\Comment{Atom-level: singletons only (len$=1$)}
			\EndIf
			\State \Return $\textproc{Deduplicate}(\mathcal{C})$
			\EndFunction
		\end{algorithmic}
	\end{algorithm}
	
	\subsection{Phase 1: Planning and Expansion Algorithms}
	\label{app:planning_algorithms}
	
	\begin{algorithm}[H]
		\caption{Hierarchical A* Planner}\label{alg:tdr-astar-corrected}
		\begin{algorithmic}[1]
			\Require Goals $G$, initial reasoner state $R_0$, max depth $d$, actions $\Delta$
			\Ensure Candidate plans $\mathit{Solutions}$
			
			\State $N_0 \gets \textsc{Node}(\mathit{subgoals}=G, R=R_0, g=0, \mathit{depth}=0)$
			\State $\mathit{Open} \gets \textsc{PriorityQueue}(N_0)$ ordered by $f(N) = N.g + |N.subgoals|$
			\State $\mathit{Solutions} \gets \emptyset$
			
			\While{$\mathit{Open} \neq \emptyset$}
			\State $N \gets \mathit{Open.popMin}()$
			\If{$N.depth \geq d$} \textbf{continue} \EndIf
			
			\If{$N.subgoals = \emptyset$}
			\State $\mathit{Solutions} \gets \mathit{Solutions} \cup \{\textproc{FinalizeSolution}(N)\}$
			\State \textbf{continue}
			\EndIf
			
			\State $\mathit{Options} \gets \textproc{GetExpansionOptions}(N, \Delta)$
			\ForAll{$N' \in \mathit{Options}$}
			\State $\mathit{Open.push}(N')$
			\EndFor
			\EndWhile
			
			\State \Return $\mathit{Solutions}$
		\end{algorithmic}
	\end{algorithm}
	
	\begin{algorithm}[H]
		\caption{Expansion Strategy ($k$-level heuristic)}\label{alg:tdr-expansion-corrected}
		\begin{algorithmic}[1]
			\Function{GetExpansionOptions}{$N, \Delta$}
			\State $n \gets |N.subgoals|$
			\For{$r \gets 0$ \textbf{to} $n$} \Comment{$r$: number of goals left unresolved}
			\State $\mathit{Strategies} \gets \textproc{FindStrategiesAtLevel}(N.subgoals, r, N.R)$
			
			\If{$\mathit{Strategies} \neq \emptyset$}
			\State $\mathit{ActionNodes} \gets \emptyset$
			\ForAll{$S \in \mathit{Strategies}$}
			\State $\mathit{ActionNodes} \gets \mathit{ActionNodes} \cup \textproc{ExpandWithActions}(N, S, \Delta)$
			\EndFor
			\If{$\mathit{ActionNodes} \neq \emptyset$} \Return $\mathit{ActionNodes}$ \EndIf
			\EndIf
			\EndFor
			
			\If{$N.plan \neq \emptyset$}
			\State $A \gets \textproc{ExpandWithAssumption}(N)$ \Comment{Invokes abduction penalty}
			\If{$A \neq \bot$} \Return $\{A\}$ \EndIf
			\EndIf
			
			\State \Return $\emptyset$
			\EndFunction
		\end{algorithmic}
	\end{algorithm}
	
	\subsection{Phase 2: Validation and Post-Processing Algorithms}
	\label{app:validation_algorithms}
	
	\begin{algorithm}[H]
		\caption{Temporal Projection}\label{alg:temporal-projection-compact}
		\begin{algorithmic}[1]
			\Require Reasoner $R$, plan $\Pi$, original goals $G$
			\Ensure Validation tuple $(finished, met, cons, sat)$
			
			\State $R' \gets \textproc{ForkForSimulation}(R)$
			\State $\textproc{AddAndReason}(R', \Pi.residual\_assumptions)$
			\State $(T, E) \gets \textproc{MaterializePlanEvents}(\Pi.flat\_execution\_plan)$
			\State $\textproc{AddAndReason}(R', E)$
			
			\ForAll{$t \in T$} \Comment{Evaluate time strictly sequentially}
			\State $\Delta_t \gets \textproc{BuildProjectionActionRules}(R', t)$
			\State $R'.\textproc{reasoning}(\{\texttt{ExecuteRule}(r, t) \mid r \in \Delta_t\})$
			
			\If{$R'.consistency = 0 \lor R'.satisfiability = 0$}
			\State \Return $(\textbf{false},\ \textbf{false},\ \textbf{false},\ \textbf{false})$
			\EndIf
			\EndFor
			
			\State \Return $(\textbf{true},\ R'.mainModel.\textproc{ask}(G),\ R'.cons,\ R'.sat)$
		\end{algorithmic}
	\end{algorithm}
	
	\begin{algorithm}[H]
		\caption{Post-processing and Classification}\label{alg:tdr-postprocess-corrected}
		\begin{algorithmic}[1]
			\Require Candidate plans $\Pi$, original goals $G$
			\Ensure Plans achieving Valid Plan Execution $\mathcal{V}$ and Contradictory Story Execution $\mathcal{C}$
			
			\State $\mathcal{V} \gets \emptyset$, $\mathcal{C} \gets \emptyset$
			
			\ForAll{$p \in \Pi$}
			\State $(finished, met, cons, sat) \gets \textproc{SimulateAndEvaluatePlan}(p, G)$
			\If{$finished \land met$}
			\If{$cons \land sat$} $\mathcal{V} \gets \mathcal{V} \cup \{p\}$
			\Else~$\mathcal{C} \gets \mathcal{C} \cup \{p\}$
			\EndIf
			\EndIf
			\EndFor
			
			\State \Return $(\mathcal{V}, \mathcal{C})$
		\end{algorithmic}
	\end{algorithm}
	
	\section{JSON Execution Traces}
	\label{app:json_traces}
	
	This appendix provides the raw, machine-readable JSON execution traces generated by the PIE-APT Python backend for the four empirical case studies discussed in Section \ref{sec:evaluation}. Internal UUIDs generated during Skolemization have been simplified for readability.
	
	\subsection{Case Study 1: Bank Account}
	\label{app:json_bank}
	\begin{lstlisting}[language=json]
		{
			"Plan_ID": "plan_1",
			"Cost": 2.0,
			"Execution_Sequence":[
			{
				"Action": "ba:get_letter",
				"Agent": "ba:Amir",
				"Time": "?_T"
			},
			{
				"Action": "ba:open_account",
				"Agent": "ba:Amir",
				"Time": "?_T"
			}
			],
			"Residual_Assumptions":[],
			"Required_Initial_State":[
			"bu:Human(ba:Amir)",
			"ba:EligiblePerson(ba:Amir)",
			"ba:ProofOfAddress(ba:pr)",
			"ba:has(ba:Amir, ba:pr)"
			]
		}
	\end{lstlisting}
	
	\subsection{Case Study 2: Derived Gate}
	\label{app:json_gate}
	\begin{lstlisting}[language=json]
		{
			"Plan_ID": "plan_1",
			"Cost": 2.0,
			"Execution_Sequence":[
			{
				"Action": "dg:Act_IssueBadge",
				"Rule_Triggered": "dg:rule_IssueBadge",
				"Agent": "dg:Amir",
				"Time": "?_T"
			},
			{
				"Action": "dg:Act_EnterZone",
				"Rule_Triggered": "dg:rule_EnterSecureZone",
				"Agent": "dg:Amir",
				"Time": "?_T"
			}
			],
			"Residual_Assumptions":[],
			"Required_Initial_State":[
			"dg:RegisteredPerson(dg:Amir)"
			]
		}
	\end{lstlisting}
	
	\subsection{Case Study 3: Physical Security}
	\label{app:json_security}
	\begin{lstlisting}[language=json]
		{
			"Plan_ID": "plan_1",
			"Cost": 22.0,
			"Execution_Sequence":[
			{
				"Action": "sec:Act_TurnKey",
				"Undergoer": "sec:FrontDoorLock",
				"Time": "?_T"
			},
			{
				"Action": "sec:Act_OperateHandle",
				"Undergoer": "sec:dor",
				"Time": "?_T"
			}
			],
			"Residual_Assumptions":[
			"sec:WoodenStructure(sec:dor)",
			"sec:HingedStructure(sec:dor)",
			"sec:isInstalledOn(sec:FrontDoorLock, sec:dor)"
			],
			"Required_Initial_State":[
			"sec:MechanicalLock(sec:FrontDoorLock)"
			]
		}
	\end{lstlisting}
	
	\subsection{Case Study 4: The Tax Paradox}
	\label{app:json_tax}
	\begin{lstlisting}[language=json]
		{
			"Contradiction_Type": "Complementary Class",
			"Target_Context":[
			"owl:complementOf(com:TaxPayer, pie:ObjectComplementOf(com:TaxPayer))"
			],
			"Story_1": {
				"Cost": 21.0,
				"Execution_Sequence":[
				{
					"Action": "com:ImportWheatAction",
					"Agent": "pie:IndOfPlan_x309",
					"Time": "?_T"
				},
				{
					"Action": "com:ImportCarAction",
					"Agent": "pie:IndOfPlan_x309",
					"Time": "?_T"
				}
				],
				"Residual_Assumptions":[
				"com:Trader(pie:IndOfPlan_x309)"
				],
				"Required_Initial_State": []
			}
		}
	\end{lstlisting}
	\bibliographystyle{plain}

\end{document}